\DeclareMathOperator*{\argmax}{argmax}
\DeclareMathOperator*{\argmin}{argmin}
\newcommand{\algrule}[1][.2pt]{\par\vskip.5\baselineskip\hrule height #1\par\vskip.5\baselineskip}
\newcommand{\ind}{\perp}
\newcommand{\condind}[3]{(#1 {\ind} #2 \ | \ #3)}
\newcommand{\norm}[1]{\left\lVert#1\right\rVert}
\begin{document}

\title{Massively-Parallel Feature Selection for Big Data}

\author{
Ioannis Tsamardinos\footnote{\label{first}Equal contribution.}\and Giorgos Borboudakis\footref{first} \and Pavlos Katsogridakis \and Polyvios Pratikakis \and Vassilis Christophides        
}

\authorrunning{Tsamardinos, I., Borboudakis, G. et al.} % if too long for running head

\institute{
Ioannis Tsamardinos \at \email{tsamard.it@gmail.com} \and
Giorgos Borboudakis \at \email{borbudak@gmail.com} \and
Pavlos Katsogridakis \at \email{pkatsogr@gmail.com} \and
Polyvios Pratikakis \at \email{polyvios@ics.forth.gr} \and
Vassilis Christophides \at \email{christop@csd.uoc.gr}
}

\date{Received: date / Accepted: date}
% The correct dates will be entered by the editor

\maketitle

\keywords{feature selection $|$ variable selection $|$ forward selection $|$ Big Data $|$ data analytics}

\begin{abstract}
We present the {\em Parallel, Forward-Backward with Pruning} (PFBP) algorithm for \emph{feature selection} (FS) in Big Data settings (high dimensionality and/or sample size). To tackle the challenges of Big Data FS PFBP partitions the data matrix both in terms of rows (samples, training examples) as well as columns (features). By employing the concepts of $p$-values of conditional independence tests and meta-analysis techniques PFBP manages to rely only on computations local to a partition while minimizing communication costs. Then, it employs  powerful and safe (asymptotically sound) heuristics to make early, approximate decisions, such as {\em Early Dropping} of features from consideration in subsequent iterations, {\em Early Stopping} of consideration of features within the same iteration, or {\em Early Return} of the winner in each iteration. PFBP provides asymptotic guarantees of optimality for data distributions {\em faithfully} representable by a causal network (Bayesian network or maximal ancestral graph). Our empirical analysis confirms a super-linear speedup of the algorithm with increasing sample size, linear scalability with respect to the number of features and processing cores, while dominating other competitive algorithms in its class. 
%A proof-of-concept application on predicting disease status from Big genomic Single Nucleotide Polymorphism (SNP) data is presented.

\end{abstract}

\section{Introduction}
    
Creating predictive models from data requires sophisticated machine learning, pattern recognition, and statistical modeling techniques. When applied to Big Data settings these algorithms need to scale not only to millions of training instances (samples) but also millions of predictive quantities (interchangeably called features, variables, or attributes) \cite{Zhao2013,Zhai2014,Boln-Canedo2015}. A common way to {\em reduce data dimensionality} consists of selecting only a subset of the original
features that retains all of the predictive information regarding an outcome of interest $T$. Specifically, the objective of the Feature Selection (FS) problem can be defined as identifying a feature subset that is of {\em minimal-size} and {\em collectively} (multivariately) {\em optimally predictive}\footnote{Optimally predictive with respect to an ideal predictor; see \cite{Tsamardinos2003} for a discussion.} w.r.t. $T$\footnote{This definition covers what we call single FS; the problem of multiple FS can be defined as the problem of identifying all minimal and optimally-predictive subsets but it has received much less study in the literature \cite{Statnikov2013,MXM16}.}. By removing {\em irrelevant as well as redundant} (related to the concept of {\em weakly relevant}) features, FS essentially facilitates the learning task. It results in predictive models with fewer features that are easier to inspect, visualize, understand, and faster to apply. Thus, FS provides valuable intuition on the data generating mechanism and {\em is a primary tool for knowledge discovery; deep connections of the solutions to the FS with the causal mechanisms that generate the data have been found} \cite{Tsamardinos2003}. Indeed, FS is often the primary task of an analysis, while predictive modeling is only a by-product.

Designing a FS algorithm is challenging because by definition it is a combinatorial problem; the FS is NP-hard even for linear regression problems \cite{Welch1982}. An exhaustive search of all feature subsets is impractical except for quite small sized feature spaces. Heuristic search strategies and approximating assumptions are required to scale up FS, ranging from convex relaxations and parametric assumptions such as linearity (e.g., the Lasso algorithm \cite{Tibshirani1996}) to causally-inspired, non-parametric assumptions, such as  faithfulness of the data distribution to a causal model \cite{Pearl1991,Spirtes2000}. 

Specifically, in the context of Big Data featuring both high dimensionality and/or high sample volume, computations become {\em CPU-} and {\em data-intensive} that cannot be handled by a single machine\footnote{See \cite{Zhao2013,Zhai2014,Boln-Canedo2015} for the evolution of Big Data dimensionality in various ML datasets.}. The main challenges arising in this context are (a) how can data be partitioned both horizontally (over samples) and vertically (over features), called \emph{hybrid-partitioning}, so that \emph{computations can be performed locally in each block} and combined globally with a \emph{minimal communication overhead}; (b) what \emph{heuristics} can quickly (e.g., without the need to go through all samples) and safely (providing theoretical guarantees of correctness) eliminate irrelevant and redundant features. Hybrid partitioning over both data samples and learned models \cite{XING2016179, DBLP:conf/nips/LeeKZHGX14} is an open research issue in Big ML algorithms while safe FS heuristics has been proposed only for sparse Big Data \cite{Singh2009,Gallego17}, i.e., for data where a large percentage of values are the same (typically zeros).  

To address these challenges we introduce the {\bf Parallel, Forward-Backward with Pruning (PFBP)} algorithm for Big Volume Data. PFBP does not relies on data sparsity and is generally applicable to both dense and sparse datasets; in the future, it could be extended to include optimizations specifically designed for sparse datasets. PFBP is based on \emph{statistical tests of conditional independence} and it is inspired by statistical causal modeling that represents join probability distribution as a causal model and specifically the theory of Bayesian networks and maximal ancestral graphs \cite{Pearl2000,Spirtes2000,SpirtesRichardson2002}. 

To tackle parallelization with hybrid partitioning (challenge (a) above), PFBP decisions rely on $p$-values and log-likelihoods returned by the independence tests computed {\em locally} on each data partition; these values are then combined together using {\em statistical meta-analysis techniques} to produce {\em global} approximate $p$-values\footnote{Alternatively, one can combine the test statistics that produce the $p$-values. This is conceptually equivalent, although there may be differences in practice. 
%See \cite{} for a recent comparison of methods for combining $p$-values versus methods for combining test statistics in the context of biological data.
} and log-likelihoods. This technique essentially minimizes PFBP's communication cost, as only local $p$-values and log-likelihoods need to be communicated from workers to the master node in a  cluster of machines at each iteration of the algorithm.

To reduce the number and workload of iterations required to compute a FS solution (challenge (b) above), PFBP relies on several heuristics. First, it adapts for Big Data a heuristic called {\em Early Dropping} recently introduced in \cite{Borboudakis2017}. Early Dropping removes features from subsequent iterations thus significantly speeding up the algorithm. Then, PFBP is equipped with two new heuristics for \emph{Early Stopping} of consideration of features within the same iteration, and \emph{Early Returning} the current best feature for addition or removal. The three heuristics are implemented using \emph{Bootstrap-based statistical tests}. They are applied on the set of currently available local $p$-values and log-likelihoods to determine whether the algorithm has seen enough samples to make safely (i.e., with high probability) early decisions. 

PFBP is proven to compute the optimal feature set for distributions {\em faithful} \cite{Spirtes2000} (also called stable distributions \cite{Pearl1991}) to a causal network represented as a Bayesian Network or a maximal ancestral graph \cite{Spirtes2000, SpirtesRichardson2002}. These are data distributions whose set of conditional independencies coincides with the set of independencies entailed by a causal graph and the Markov Condition \cite{Pearl1991, Spirtes2000}. Assuming faithfulness of the data distribution has led to algorithms that have been proven competitive in practice \cite{Margaritis2000, Aliferis2003HITON, Tsamardinos2003MMPC, Tsamardinos2003IAMB, Pena2007, Aliferis2010JMLR, Lagani2010, Lagani2013, MXM16,Borboudakis2017}. We should also notice that all PFBP computations are not bound to specific data-types; by supplying different conditional independence tests PFBP becomes applicable to a wide variety of data types and target variables \cite{MXM16} (continuous, ordinal, nominal, time-to-event).

The paper is organized as follows. In Section~\ref{sec:back} we provide a brief introduction to the basic concepts required to introduce our FS algorithm. The PFBP algorithm is introduced in Section~\ref{sec:alg}. In Section~\ref{sec:pruning} we explain the heuristics used by PFBP in detail, and show to how to implement them using bootstrap-based tests. Guidelines for setting the hyper-parameter values for the data partitioning used by PFBP are presented in Section~\ref{sec:parameters}. In Section~\ref{sec:pfbp:practical} we list some implementation details of PFBP, which are required for a fast and robust implementation. The theoretical properties of PFBP are presented in Section~\ref{sec:theory}. A high-level theoretical comparison of PFBP to alternative feature selection algorithms, as well as an overview of feature selection methods for Big Data is given in Section~\ref{sec:related}. Finally, in Section~\ref{sec:experiments} we evaluate PFBP on synthetic data, and compare it to alternative forward-selection algorithms on 11 binary classification datasets.

\section{Background and Preliminaries}
\label{sec:back}

\begin{table}[!t]
\centering
	\caption{Table containing common acronyms, terms and mathematical notation (left) used throughout the paper with a short description (right).}
    \label{tbl:notation}
  \begin{tabular}{ll}
    \toprule
    FBS & Forward-Backward Selection\\
    PFBP & Parallel Forward-Backward with Pruning\\
    UFS & Univariate Feature Selection\\
    SFO & Single Feature Optimization\\
    ED & Early Dropping\\
    ES & Early Stopping\\
    ER & Early Return\\
    \toprule
    Iteration & forward (backward) iteration of PFBP\\
    Phase & forward (backward) loop of PFBP\\
    Run & execution of a forward and a backward Phase by PFBP\\
    Feature Subset & subset of features\\
    Sample Subset & subset of samples\\
    Data Block & contains samples of one Sample Subset and one Feature Subset\\
    Group Sample & set of Sample Subsets\\
    Group & set of Data Blocks corresponding to Sample Subsets in a Group Sample\\    
    \toprule
    $X$ & Random variable\\
    $\mathbf{X}$ & set of random variables\\
    $|\mathbf{X}|$ & number of elements in $\mathbf{X}$\\
    $T$ & outcome (or target) variable\\
    $Test(X_k,T|\mathbf{S})$ & conditional independence test of $X_k$ with $T$ given $\mathbf{S}$ \\
    $p_k$ & $p$-value of $Test(X_k,T|\mathbf{S})$ (for some $\mathbf{S}$) \\
    df & Degrees of Freedom\\
    $\alpha$ & significance level threshold \\
    $\mathcal{D}$ & Dataset - 2-D matrix\\
    $\mathbf{F}$ & Features in $\mathcal{D}$\\
    $F_j$ & $j$-th Feature Subset\\
    $nf$ & number of Feature Subsets\\
    $f$ & number of features in each Feature Subset\\
    $S_i$ & $i$-th Sample Subset\\
    $ns$ & number of Sample Subsets\\
    $s$ & number of samples in each Sample Subset\\
    $G_q$ & $q$-th Group Sample\\
    $Q$ & number of Group Samples\\
    $C$ & number of Sample Subsets per Group Sample\\
    $D_{i,j}$ & Data Block with rows $S_i$ and columns $F_j$\\
    $\Pi$ & 2-D matrix with local log $p$-values\\
    $\pi_{i,j}$ & local $p$-value of $j$-th alive variable in $\Pi$ computed on rows in $S_i$\\
    $\pi$ & Vector with combined log $p$-values\\
    $\pi_i$ & combined log $p$-value for the $i$-th alive variable\\
    $\mathbf{S}$ & set of Selected features\\
    $\mathbf{R}$ & set of Remaining features\\
    $\mathbf{A}$ & set of Alive features\\
    $B$ & number of bootstrap iterations used by bootstrap tests\\
    $^b$ & value corresponding to $b$-th bootstrap sample\\
    $P_{drop}$ & threshold used by bootstrap test for Early Dropping\\
    $P_{stop}$ & threshold used by bootstrap test for Early Stopping\\
    $P_{return}$ & threshold used by bootstrap test for Early Return\\
    $lt$ & tolerance level used by bootstrap test for Early Return\\
    \bottomrule
  \end{tabular}
\end{table}

In this section, we provide the basic notation used throughout the paper, and present the core algorithmic and statistical reasoning techniques exploited by the proposed FS algorithm. Random variables are denoted using upper-case letters (e.g. X), while sets of random variables are denoted using bold upper-case letters (e.g. $\mathbf{Z}$). We use $|\mathbf{Z}|$ to refer to the number of variables in $\mathbf{Z}$. The terms variable and feature will be used interchangeably, and the outcome (or target) variable will be denoted as $T$. A summary of acronyms, terms and notation is given in Table~\ref{tbl:notation}.

\subsection{Forward-Backward Feature Selection}
\label{sec:back:fbs}
The Forward-Backward Selection algorithm (FBS) is an instance of the stepwise feature selection algorithm family \citep{Kutner2004, Weisberg2005}. It is also one of the first and most popular algorithms for causal feature selection \cite{Margaritis2000,Tsamardinos2003IAMB}. In each forward \textbf{Iteration}, FBS selects the feature that provides the largest increase in terms of predictive performance for $T$, and adds it to the set of selected variables, denoted with $\mathbf{S}$ hereon, starting from the empty set. The forward \textbf{Phase} ends when no feature further improves performance or a maximum number of selected features has been reached. In each Iteration of the backward Phase, the feature that most confidently does not reduce performance is removed from $\mathbf{S}$. The backward Phase stops when no feature can be removed without reducing performance. We use the terms \textbf{Phase} to refer to the forward and backward loops of the algorithm and \textbf{Iteration} to the part that decides which feature to add or remove next.

\begin{algorithm}[!t]
\caption{Forward-Backward Selection}
\label{alg:fbs}
\begin{algorithmic}
	\Require Dataset $\mathcal{D}$, Target $T$, Significance Level $\alpha$
	\Ensure Selected Features $\mathbf{S}$
    \State $\mathbf{S} \gets \emptyset$ \Comment{\textit{Selected features, initially empty}}
    \State \Comment{\textit{Forward Phase: Iterate until no more features can be selected}}
	\While{$\mathbf{S}$ changes} 
	\State \Comment{\textit{Identify $V^*$ with minimum $p$-value conditional on $\mathbf{S}$}}
	\State $V^*$ $\gets$ $\argmin\limits_{V \in \mathbf{V}_{\mathcal{D}} \setminus \mathbf{S}}$ \textproc{Pvalue}$(T,V|\mathbf{S})$
	\State \Comment{\textit{Select $V^*$ if conditionally dependent with $T$ given $\mathbf{S}$}}
	\If{\textproc{Pvalue}$(T,V^*|\mathbf{S})$ $\leq$ $\alpha$}
	\State $\mathbf{S}$ $\gets$ $\mathbf{S} \cup V^*$
	\EndIf
	\EndWhile
    \State \Comment{\textit{Backward Phase: Iterate until no more features can be removed from $\mathbf{S}$}}
	\While{$\mathbf{S}$ changes} 
	\State \Comment{\textit{Identify $V^*$ with maximum $p$-value conditional on $\mathbf{S} \setminus V^*$}}
	\State $V^*$ $\gets$ $\argmax\limits_{V \in \mathbf{S}}$ \textproc{Pvalue}$(T,V|\mathbf{S} \setminus V)$
	\State \Comment{\textit{Remove $V^*$ if conditionally independent with $T$ given $\mathbf{S} \setminus V^*$}}
	\If{\textproc{Pvalue}$(T,V^*|\mathbf{S} \setminus V^*$) $>$ $\alpha$}
	\State $\mathbf{S}$ $\gets$ $\mathbf{S} \setminus V^*$
	\EndIf
	\EndWhile	
	\State {\bfseries return} $\mathbf{S}$
\end{algorithmic}
\end{algorithm}

To determine whether predictive performance is increased or decreased when a single feature is added or removed in a greedy fashion, FBS uses conditional independence tests\footnote{Alternatively, one can use information criteria such as AIC \citep{Akaike1973} and BIC \citep{Schwarz1978}, or out-of-sample methods such as cross-validation to evaluate the performance of the current set of selected features; see \citep{Kutner2004, Weisberg2005} for more details.}. 
An important advantage of methods relying on conditional independence tests is that it \textit{allows one to adapt and apply the algorithm to any type of outcome (e.g. nominal, ordinal, continuous, time-to-event, time-course, time series) for which an appropriate statistical test of conditional independence exists}. This way, the same feature selection algorithm can deal with different data types\footnote{For example, the R-package MXM \cite{MXM16} includes asymptotic, permutation-based, and robust tests for nominal, ordinal, continuous, time-course, and censored time-to-event targets}.

Conditional independence of $X$ with $T$ given $\mathbf{S}$ implies that $P(T | \mathbf{S}, X) = P(T | \mathbf{S})$, whenever $P(\mathbf{S}) > 0$ ($\mathbf{S}$ is allowed to be the empty set).
Thus, when conditional independence holds, $X$ is not predictive of $T$ when $\mathbf{S}$ (and only $\mathbf{S}$) is known.
A conditional independence test assumes the null hypothesis that feature $X$ is probabilistically independent of $T$ (i.e., redundant) given a set of variables $\mathbf{S}$ and is denoted by $\mathit{Test}(X, T | \mathbf{S})$. 
The test returns a $p$-value, which corresponds to the probability that one obtains deviations from what is expected under the null hypothesis as extreme or more extreme than the deviation actually observed with the given data. 
When $p_k \equiv \mathit{Test}(X_k, T | \mathbf{S})$ is low, the null hypothesis can be safely rejected: the value of $X_k$ {\em does provide predictive information} for $T$ when the values of {\bf S} are known.
In practice, decisions are made using a threshold $\alpha$ (significance level) on the $p$-values; the null hypothesis is rejected if the $p$-value is below $\alpha$.

In the context of feature selection, the $p$-values $p_k$ returned by statistical hypotheses tests of conditional independence are employed not only to reject or accept hypotheses, but also {\em to rank the features according to the predictive information they provide for $T$} given $\mathbf{S}$.
Intuitively, this can be justified by the fact that everything else being equal (i.e., sample size, type of test) the $p$-values of such tests in case of dependence have (on average) the reverse ordering with the conditional association of the variables with $T$ given $\mathbf{S}$. So, the basic variant of the algorithm selects to add (remove) the feature with the lower (higher) $p$-value in each Forward (Backward) Iteration.
The Forward-Backward Selection algorithm using conditional independence tests is summarized in Algorithm~\ref{alg:fbs}.
We use $V_\mathcal{D}$ to denote the set of variables contained in dataset $\mathcal{D}$ (excluding $T$).
The $\textproc{Pvalue}(T,X|\mathbf{S})$ function performs a conditional independence test of $T$ and $X$ given $\mathbf{S}$ and returns a $p$-value.

\subsection{Implementing Independence Tests using the Likelihood Ratio Technique}
There are several methods for assessing conditional independence, such as likelihood-ratio based tests (or asymptotically equivalent approximations thereof like score tests and Wald tests \citep{Engle1984}) or kernel-based tests \cite{Zhang2011}.
We focus on likelihood-ratio based tests hereafter, mostly because they are general and can be applied for different data types (e.g. continuous, ordinal, nominal, time-to-event, to name a few), although the main algorithm is not limited to such tests but can be applied with any type of test.

To construct a likelihood-ratio test for conditional independence of $T$ with $X$ given $\mathbf{S}$ one needs a statistical model that maximizes the log-likelihood of the data $LL(D; \theta) \equiv \log P(D | \theta)$ over a set of parameters $\theta$. Without loss of generality, we assume hereafter $T$ is binary and consider the binary logistic regression model. For the logistic regression, the parameters $\theta$ are weight coefficients for each feature in the model and an intercept term. Subsequently, two statistical models have to be created for $T$: (i) model $M_0$ using only variables $\mathbf{S}$, and (ii) model $M_1$ using $\mathbf{S}$ and $X$ resulting in corresponding log-likelihoods $LL_0$ and $LL_1$. The null hypothesis of independence now becomes equivalent to the hypothesis that both log-likelihoods are equal asymptotically. The test statistic function of the test is called the {\em deviance} and is defined as $$D \equiv 2\times (LL_0 - LL_1)$$ 

Notice that, the difference in the logs of the likelihoods corresponds to the ratio of the likelihoods, hence the name likelihood-ratio test. The test statistic is known to follow asymptotically a $\chi^2$ distribution with $P_1 - P_0$ degrees of freedom \citep{Wilks1938}, where $P_1$ and $P_0$ are degrees of freedom of models $M_1$ and $M_0$ respectively\footnote{An implicit assumption made here is that the models are correctly specified. If this does not hold, the statistic follows a different distribution \citep{Foutz1977}. There exist methods that handle the more general case \cite{White1982, Vuong1989}, but this is clearly out of this paper's scope.}. When $X$ is a continuous feature, only one more parameter is added to $\theta$ so the difference in degrees of freedom is 1 for this case. Categorical predictors can be used by simply encoding them as $K-1$ dummy binary features, where $K$ is the number of possible values of the original feature. In this case, the difference in degrees of freedom is $K-1$. Knowing the theoretical distribution of the statistic allows one to compute the $p$-value of the test: $p = 1 - \mathit{cdf}(D, df)$, where $\mathit{cdf}$ is the cumulative probability distribution function of the $\chi^2$ distribution with degrees of freedom $df$ and $D$ the observed deviance. Likelihood-ratio tests can be constructed for any type of data for which an algorithm for maximizing the data likelihood exists, such as binary, multinomial or ordinal logistic regression, linear regression and Cox regression to name a few. 

Likelihood-ratio tests are {\em approximate} in the sense that the test statistic has a $\chi^2$ distribution only {\em asymptotically}. When sample size is low, the asymptotic approximation may return inaccurate $p$-values. Thus, {\em to apply approximate tests it is important to ensure a sufficient number of samples is available}. This issue is treated in detail in the context of PBFP and the logistic test in Section \ref{app:minsample}. Note that, the aforementioned models and the corresponding independence tests are only suited for identifying linear dependencies; certain types of non-linear dependencies may also be identifiable if one also includes interaction terms and feature transformations in the models. 

\subsection{Combining p-values Using Meta-Analysis Techniques}
\label{sec:combine}
A set of $p$-values stemming from testing the {\em same} null hypothesis (e.g. testing the conditional independence of $X$ and $Y$ given $\mathbf{Z}$) can be combined using statistical meta-analysis techniques into a single $p$-value. Multiple such methods exist in the literature \citep{Loughin2004}. Fisher's combined probability test \cite{Fisher1932} is one such method that has been shown to work well across many cases \citep{Loughin2004}. It assumes that the $p$-values are independent and combines them into a single statistic using the formula
$$ \text{Statistic} \equiv -2\sum_{i = 1}^{K} \log(p_i) $$
where $K$ is the number of $p$-values, $p_i$ is the $i$-th $p$-value, and $\log$ is the natural logarithm. The statistic is then distributed as a $\chi^2$ random variable with $2\cdot K$ degrees of freedom, from which a combined $p$-value is computed. 

\subsection{Bootstrap-based Hypothesis Testing}
The bootstrap procedure \cite{Efron1994} can be used to compute the distribution of a statistic of interest. Bootstrapping is employed in the PFBP algorithm for making early, probabilistic decisions. Bootstrapping is a general-purpose non-parametric resampling-based procedure which works as follows: (a) resample with replacement from the input values a sample of equal size, (b) compute the statistic of interest on the bootstrap sample, (c) repeat steps (a) and (b) many times to get an estimate of the bootstrap distribution of the statistic.
The bootstrap distribution can then be used to compute properties of the distribution such as confidence intervals, or to compute some condition of the statistic; a simple example application on the latter follows.

Let $\mu_X$ denote the mean of random variable $X$ and let $\hat{\mu}_X$ denote the estimate of the mean of $X$ given a sample of $X$.
Assume we are given a sample of size $n$ of random variable $X$ and we want to compute the probability that the mean of $X$ is larger than $10$, $P(\mu_X > 10)$.
That probability is a Bernoulli random variable, and the statistic in this case is a binary valued variable (i.e., taking a value of 0 or 1 with probability $P(\mu_X > 10)$).
Using bootstrapping, $P(\mu_X > 10)$ can be estimated as follows: (a) sample with replacement $n$ values of $X$ and create the $b$-th bootstrap sample $X^b$, (b) estimate the mean of $X^b$, denoted as $\hat{\mu}^b_X$, and compute $I(\hat{\mu}^b_X > 10)$, where $I$ is the indicator function returning 1 if the inequality holds and 0 otherwise, and (c) repeat (a) and (b) $B$ times (e.g. $B = 1000$).
$P(\mu_X > 10)$ is then computed as
$$
P(\mu_X > 10) = \frac{I(\hat{\mu}_X > 10) + \sum_{i = 1}^{B} I(\hat{\mu}_X^b > 10)}{B+1}
$$
Note that, we also compute the statistic on the original sample, and thus divide by $B+1$.

\subsection{Probabilistic Graphical Models and Markov Blankets}
In this section, we give a brief overview of Bayesian networks and maximal ancestral graphs, which will be used later on to present the theoretical properties of the proposed algorithm.
A more extensive exposition and rigorous treatment can be found in \cite{Spirtes2000,SpirtesRichardson2002,Aliferis2010JMLR}.

\subsubsection{Bayesian Networks}
A Bayesian network $B = \langle G, P \rangle$ consists of a directed acyclic graph $G$ over a set of vertices $V$ and a joint distribution $P$, over random variables that correspond one-to-one to vertices in $V$ (thus, no distinction is made between variables and vertices). The \textbf{Markov condition} has to hold between $G$ and $P$: every variable $X$ is conditionally independent of its non-descendants in $G$, given its parents, denoted by $Pa(X)$. The Markov condition leads to a factorization of the joint probability $P(V) = \prod_i P(X_i | Pa(X_i))$. Thus, the graph $G$ determines a factorization of the probability distribution, directly implying that some independencies have to hold, and further entailing (along with the other probability axioms) some additional conditional independencies. A Bayesian network is called \textbf{faithful} if all and only the conditional independencies in $P$ are entailed by the Markov condition. Conceptually, this faithfulness condition means that all independencies in the distribution of the data are determined by the structure of the graph $G$ and not the actual parameterization of the distribution. A distribution $P$ is called \textbf{faithful} (to a Bayesian Network) if there is a graph $G$ such that $B = \langle G, P \rangle$ is faithful. Under the Markov and faithfulness assumptions, a graphical criterion called \textbf{d-separation} \cite{Verma1988,Pearl1988} can be used to read off dependencies and independencies encoded in a Bayesian network. To define $d$-separation the notion of {\em colliders} is used, which are triplets of variables $\langle X, Y, Z\rangle$ with $X$ and $Z$ having directed edges into $Y$. Two variables $X$ and $Y$ are $d$-connected by a set of variables $\mathbf{Z}$ if and only if there exists a (not necessarily directed) path $p$ between $X$ and $Y$ such that (i) for each collider $V$ on $p$, $V$ is either in $\mathbf{Z}$ or some descendant of $V$ is in $\mathbf{Z}$, and (ii) no non-collider on $p$ is in $\mathbf{Z}$. In case no such path exists, $X$ and $Y$ are $d$-separated given $\mathbf{Z}$. Thus, the Markov and faithfulness conditions imply that if two variables $X$ and $Y$ are $d$-separated ($d$-connected) given $\mathbf{Z}$, then they are conditional independent (dependent) given $\mathbf{Z}$.

\subsubsection{Maximal Ancestral Graphs}
A distribution class strictly larger than the set of faithful distributions to Bayesian Networks, is the set of distributions that are marginals of faithful distributi0ons. Unfortunately, marginals of faithful distributions are not always faithful to some Bayesian network! 
Thus, marginalization over some variables loses the faithfulness property: the marginal distribution cannot always be faithfully represented by a Bayesian network. However, faithful marginal distributions can be represented by another type of graph called \textbf{directed maximal ancestral graph} \cite{SpirtesRichardson2002} or DMAG. DMAGs include not only directional edges, but also bi-directional edges. DMAGs are extensions of Bayesian networks for marginal distributions and are closed under marginalization. The representation of a marginal of a faithful (to a Bayesian network) distribution by a DMAG is again faithful, in the sense that all and only the conditional independencies in the distribution are implied by the Markov condition. The set of conditional independencies entailed by a DMAG is provided by a criterion similar to $d$-separation, now called $m$-separation. 

\subsubsection{Markov Blankets in Probabilistic Graphical Models} A \textbf{Markov blanket} of $T$ with respect to a set of variables $\mathbf{V}$ is defined as a minimal set $\mathbf{S}$ such that $\condind{\mathbf{V} \setminus \mathbf{S}}{T}{\mathbf{S}}$, where $\condind{\mathbf{X}}{T}{\mathbf{S}}$ denotes the conditional independence of $\mathbf{X}$ with $T$ given $\mathbf{S}$. Thus, a Markov blanket of $T$ is any minimal set that renders all other variables conditionally independent. An important theorem connects the Markov blanket of $T$ with the feature selection problem for $T$: {\em under broad conditions \cite{Margaritis2000,Tsamardinos2003} a Markov blanket of $T$ is a solution to the feature selection problem for $T$}. When the distribution is faithful to a Bayesian network or DMAG, the Markov blanket of $T$ is unique \footnote{Some recent algorithms \cite{Statnikov2013,MXM16} deal with the problem of solution multiplicity in feature selection.}. In other words, for faithful distributions, the Markov Blanket of $T$ has a direct graphical interpretation. The Markov blanket consists of all vertices adjacent to $T$, and all vertices that are reachable from $T$ through a collider path, which is a path where all vertices except the start and end vertices are colliders \cite{Borboudakis2017}. For Bayesian networks, this corresponds to the set of parents (vertices with an edge to $T$), children (vertices with an edge from $T$), and spouses (parents of children) of $T$ in $G$.

\section{Massively Parallel Forward-Backward Algorithm}
\label{sec:alg}
We provide an overview of our algorithm, called Parallel, Forward-Backward with Pruning ({\bf PFBP}), an extension of the basic Forward-Backward Selection (FBS) algorithm (see Section~\ref{sec:back:fbs} for a description).
We will use the terminology introduced for FBS: a forward (backward) \textbf{Phase} refers to the forward (backward) loops of FBS, and an \textbf{Iteration} refers to each loop iteration that decides which variable to select (remove) next.
PFBP is presented in ``evolutionary'' steps where successive enhancements are introduced in order to make computations local or reduce computations and communication costs; the complete algorithm is presented in Section~\ref{sec:pfbp}. 
To evaluate predictive performance of candidate features we use $p$-values of conditional independence tests, as described in Section~\ref{sec:back:fbs}.
We assume the data are provided in the form of a 2-dimensional matrix $\mathcal{D}$ where rows correspond to training instances (samples) and columns to features (variables), and one of the variables is the target variable $T$.
Physically, the data matrix is partitioned in sub-matrices $D_{i,j}$ and stored in a distributed fashion in {\em workers} in a cluster running Spark \cite{Zaharia2010} or similar platform.
Workers perform in parallel local computations on each $D_{i,j}$ and a {\em master} node performs the centralized, global computations.

\subsection{Data Partitions in Blocks and Groups and Parallelization Strategy}
\begin{figure*}[!ht]
\centering
\includegraphics[width=1\textwidth]{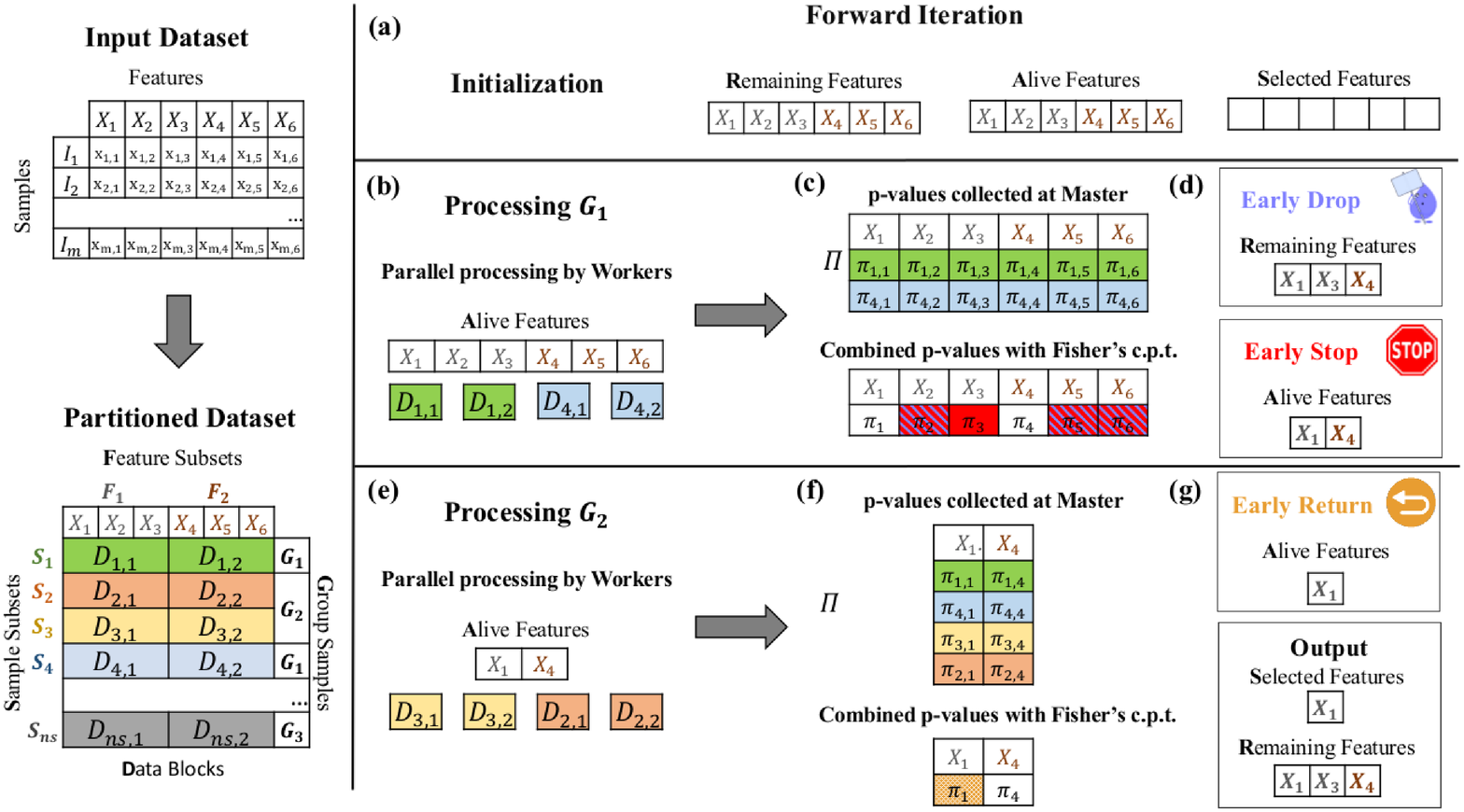}
\caption{ {\bf Left}: Data partitioning of the algorithm. In the top the initial data matrix $\mathcal{D}$ is shown with 6 features and instances $I_1, \ldots, I_m$. In the bottom, the 6 features are partitioned to Feature Subsets $F_1 = \{1, 2, 3\}$ and $F_2 = \{4, 5, 6\}$. The rows are randomly partitioned to Sample Subsets $S_1, \ldots, S_{\mathit{ns}}$, and the Sample Subsets are assigned to Group Samples. Each Block $D_{i,j}$ is physically stored as a unit. {\bf Right}: Example of trace of a Forward Iteration of PFBP. (a) The $\mathbf{R}$emaining features, $\mathbf{A}$live features, and $\mathbf{S}$elected features are initialized.
(b) All Data Blocks $D_{1,1}, D_{1,2}, D_{4,1}, D_{4,2}$ in the first Group are processed in parallel (by workers). 
(c) The resulting local $p$-values are collected (reduced) in a master node for each Alive feature and Sample Set in the first Group (as well as the likelihoods, not shown in the Figure). (d) Bootstrap-based tests determine which features to Early Drop or Stop based on $\Pi$, or whether to Early Return (based on $\Lambda$, not shown in the Figure). The sets $\mathbf{R}$ and $\mathbf{A}$ are updated accordingly. In this example, $X_2$, $X_5$ and $X_6$ are Dropped, and only $X_1$ and $X_4$ remain Alive. Notice that always $\mathbf{A} \subseteq \mathbf{R}$. 
(e) The second Group is processed in parallel (by workers) containing Blocks $D_{3,1}, D_{3,2}, D_{2,1}, D_{2,2}$. 
(f) New local $p$-values for all features still Alive are appended to $\Pi$. If $G_2$ was the last Group, global $p$-values for the Alive features would be computed and the one with the minimum value (in this example $X_1$) would be selected for inclusion in {\bf S}. (g) In case, $X_1$ and $X_4$ are deemed almost equally predictive (based on their log-likelihoods) the current best is Early Returned.}
\label{fig:trace}
\end{figure*}

We now describe the way $\mathcal{D}$ is partitioned in sub-matrices to enable parallel computations. 
First, the set of available features (columns) $\mathbf{F}$ is partitioned to about equal-sized \textbf{Feature Subsets} $\{F_1, \ldots, F_\mathit{nf}\}$. 
Similarly, the samples (rows) are randomly partitioned to about equal-sized \textbf{Sample Subsets} $\{S_1, \ldots, S_\mathit{ns}\}$. 
The row and column partitioning defines sub-matrices called \textbf{Data Blocks} $D_{i,j}$ with rows $S_i$ and features $F_j$. 
Sample Subsets are assigned to $Q$ \textbf{Group Samples} $\{G_q\}_1^C$ of size $C$ each, where each group sample $G_q$ is a set $\{S_{q_1}, \ldots, S_{q_n}\}$ (i.e., the set of Sample Subsets is partitioned). 
The Data Blocks $D_{i,j}$ with samples within a group sample $S_i \in G_q$ belong in the same {\bf Group}. 
This second, higher level of grouping is required by the bootstrap tests explained in Section~\ref{sec:pruning}.
Data Blocks in the same Group are processed in parallel in different workers (provided enough are available). 
However, Groups are processed sequentially, i.e., computation in all Blocks within a Group has to complete to begin computations in the Blocks of the next Group. 
Obviously, if workers are more than the Data Blocks, there is no need for defining Groups. 
The data partitioning scheme is shown in Figure~\ref{fig:trace}:Left.
Details of how the number of Sample Sets $\mathit{ns}$, the number of Feature Subsets $\mathit{nf}$, and the number $C$ of Group Samples are determined are provided in Section~\ref{sec:parameters}.

\subsection{Approximating Global $p$-values by Combining Local $p$-values Using Meta-Analysis}
Recall that Forward-Backward Selection uses $p$-values stemming from conditional independence tests to rank the variables and to select the best on for inclusion (forward Phase) or exclusion (backward Phase).
Extending the conditional independence tests to be computed over multiple Data Blocks is not straightforward, and may be computationally inefficient.
For conditional independence tests based on regression models (e.g. logistic or Cox regression), a maximum-likelihood estimation over all samples has to be performed, which typically does not have a closed-form solution and thus requires the use of an iterative procedure (e.g. Newton descent).
Due to its iterative nature, it results in a high communication cost rendering it computationally inefficient, especially for feature selection purposes on Big Data, as many models have to be fit at each Iteration.

Instead of fitting full (global) regression models, we propose to perform the conditional independence tests locally on each data block, and to combine the resulting $p$-values using statistical meta-analysis techniques.
Specifically, the algorithm computes \textit{local $p$-values denoted by $\pi_{i,k}$} for candidate feature $X_k$ from only the rows in $S_i$ of a data block $D_{i,j}$, where $F_j$ contains the feature $X_k$. 
This enables massive parallelization of the algorithm, as each data block can be processed independently and in parallel by a different worker.
The local $p$-values $\pi_{i,k}$ are then \textit{communicated} to the master node of the cluster, and are stored in a matrix $\Pi$; we will use $\pi_{i,k}$ to refer to the elements of matrix $\Pi$, corresponding to the local $p$-value of $X_k$ computed on a data block containing samples in sample set $S_i$.
Using the $p$-values in matrix $\Pi$, the master node combines the $p$-values to \textit{global $p$-values} for each feature $X_k$ using Fisher's combined probability test \cite{Fisher1932} (Fig.~\ref{fig:trace}:Right(b)) \footnote{Naturally, any method for combining $p$-values can be used instead of Fisher's method, but we did not further investigate this in this work.}.
Finally, we note that this approach is not limited to regression-based tests, but can be used with any type of conditional independence test, and is most appropriate for tests which are hard to parallelize, or computationally expensive (e.g. kernel-based tests \citep{Zhang2011}).

Using Fisher's combined probability test to combine local $p$-values does not necessarily lead to the same $p$-value as the one computed over all samples.
There are no guarantees how close those $p$-values will be in case the null hypothesis of conditional independence holds, except that they are uniformly distributed between 0 and 1.
In case the null hypothesis does not hold however, one expects to reject the null hypothesis using either method in the sample limit.
What matters in practice is the fact that PFBP makes often the same decision at each Iteration, that is, the top ranked variable is often the same.
Note that, even if the top ranked variable is not the same one, PFBP may still perform well, as long as some other informative variable is ranked first.
The accuracy of Fisher's combined probability test is further investigated in experiments on synthetic data, presented in Section~\ref{app:minsample}, where we show that, if the sample size per data block is sufficiently high, choosing a value by combining $p$-values leads to the same decision.

For the computation of the local $p$-values on $D_{i,j}$, samples $S_i$ of the selected features $\mathbf{S}$ are required, and thus the data need to be broadcast to every worker processing $D_{i,j}$ whenever $\mathbf{S}$ is augmented, i.e., in the end of each Forward Iteration. 
In total, the communication cost of the algorithm is due to the assembly of all local $p$-values $\pi_{i,k}$ to determine the next feature to include (exclude), as well as the broadcast of the data for the newly added feature in $\mathbf{S}$ at the end of each forward Iteration. 
\textit{We would like to emphasize that the bulk of computation of the algorithm is the calculation of local $p$-values that require expensive statistical tests and it takes place in the workers in parallel. The central computations in the master are minimal}.

\subsection{Speeding-up PFBP using Pruning Heuristics}
In this section, we present 3 pruning heuristics used by PFBP to speed-up computation.
Implementation details of the heuristics using locally computed $p$-values are presented in Section~\ref{sec:pruning}.

\subsubsection{Early Dropping of Features from Subsequent Iterations}
The first addition to PFBP is the {\bf Early Dropping} (ED) heuristic, first introduced in \citep{Borboudakis2017} for a non-parallel version of Forward-Backward Selection.
Let $\mathbf{R}$ denote the set of remaining features, that is, the set of features still under consideration for selection.
Initially, $\mathbf{R} = \mathbf{F} \setminus \mathbf{S}$, where $\mathbf{F}$ is the set of all available features and $\mathbf{S}$ is the set of selected features, which is initially empty.
At each forward Iteration, ED removes from $\mathbf{R}$ all features that are conditionally independent of the target $T$ given the set of currently selected features $\mathbf{S}$.
Typically, just after the first few Iterations of PFBP, only a very small proportion of the features will still remain in $\mathbf{R}$, leading to orders of magnitude of efficiency improvements even in the non-parallel version of the algorithm \citep{Borboudakis2017}.
When the set of variables $\mathbf{R}$ becomes empty, we say that PFBP finished one $\textbf{Run}$.
Unfortunately, the Early Dropping heuristic without further adjustments may miss important features which seem uninformative at first, but provide information for $T$ when considered with features selected in subsequent Iterations.
Variables should be given additional opportunities to be selected by performing more Runs. Each additional Run calls the forward phase again but starts with the previously selected variables $\mathbf{S}$ and re-initializes the remaining variables to $\mathbf{R} = \mathbf{F} \setminus \mathbf{S}$.
By default, PFBP uses 2 Runs, although a different number of Runs may be used.
Typically a value of 1 or 2 is sufficient in practice, with larger values requiring more computational time while also giving stronger theoretical guarantees; the theoretical properties of PFBP with ED are described in Section~\ref{sec:theory} in detail; in short, PFBP with 2 Runs and assume no statistical errors in the conditional independence tests returns the Markov Blanket of $T$ is distributions faithful to a Bayesian Network. \textit{Overall, by discarding variables at each Iteration, the Early Dropping heuristic allows the algorithm to scale with feature size.}

\subsubsection{Early Stopping of Features within the Same Iteration}
The next addition to the algorithm regards \textbf{Early Stopping} (ES) of consideration of features \textit{within the same} Iteration, i.e., in order to select the next best feature to select in a forward Iteration or to remove in a backward Iteration.
To implement ES we introduce the set $\mathbf{A}$ of features still \emph{Alive} (i.e., under consideration) in the current Iteration, initialized to $\mathbf{A} = \mathbf{R}$ at the beginning of each Iteration (see Figure~\ref{fig:trace}:Right(b)). 
As the master node gathers local $p$-values for a feature $X_k$ from several Data Blocks, it may be able to determine that no more local $p$-values need to be computed for $X_k$. 
This is the case if these $p$-values are enough to safely decide that with high probability $X_k$ is not going to be selected for inclusion (Forward Phase) or exclusion (Backward Phase) in this Iteration. 
In this case, $X_k$ is removed from the set of alive features $\mathbf{A}$, and is not further considered in the current Iteration.
\textit{This allows PFBP to quickly filter out variables which will not be selected at the current Iteration.
Thus, ES leads to a super-linear speed-up of the feature selection algorithm with respect to the sample size: even if samples are doubled, the same features will be Early Stopped; $p$-values will not be computed for these features on the extra samples}. 

\subsubsection{Early Return of the Winning Feature}
The final heuristic of the algorithm is called \textbf{Early Return} (ER).
Recall that Early Dropping will remove features conditionally independent of $T$ given $\mathbf{S}$ from this and subsequent Iterations while Early Stopping will remove non-winners from the current Iteration. 
However, even using both heuristics, the algorithm will keep producing local $p$-values for features $X_j$ and $X_k$ that are candidates for selection and at the same time are informationally indistinguishable (equally predictive given $\mathbf{S}$) with regards to $T$ (this is the case when the residuals of $X_j$ and $X_k$ given  $\mathbf{S}$ are almost collinear). 
When two or more features are both candidates for selection and almost indistinguishable, it does not make sense to go through the remaining data: all choices are almost equally good. 
Hence, Early Return terminates the \textit{computation} in the current Iteration and returns the current best feature $X_j$, if with high probability it is not going to be much worse than the best feature in the end of the Iteration (see Fig.~\ref{fig:trace}: Right(c)). 
Again, the result is that computation in the current Iteration may not process all Groups. 
The motivation behind Early Return is similar to Early Stopping, in that it tries to quickly determine the next variable to select.
The difference is that, Early Return tries to quickly determine whether a variable is ``good enough'' to be selected, in contrast to Early Stopping which discards unpromising variables.

A technical detail is that judging whether two features $X_i$ and $X_j$ are ``equally predictive" is implemented using the log-likelihoods $\lambda_i$ and $\lambda_j$ of the models with predictors $\mathbf{S} \cup \{X_i\}$ and $\mathbf{S} \cup \{X_j\}$ instead of the corresponding $p$-values. 
The likelihoods are part of the computation of the $p$-values, thus incur no additional computational overhead. 

\subsection{The Parallel Forward-Backward with Pruning Algorithm}
\label{sec:pfbp}

\begin{algorithm}[t!]
\caption{\textbf{Parallel Forward-Backward With Pruning} (\textproc{PFBP})}
\label{alg:fbs:general}
\begin{algorithmic}[1]
	\Require Dataset $\mathcal{D}$, Target $T$, Maximum Number of Runs maxRuns
    \Ensure Selected Variables $\mathbf{S}$
    \State \Comment{\textit{Data Partitioning}}
	\State \textit{Randomly assign samples to sample sets $S_1, \dots, S_{\mathit{ns}}$}
    \State \textit{Assign sample sets $S_1, \dots, S_{\mathit{ns}}$ to equally-sized Groups $G_1, \dots, G_K$}
	\State \textit{Assign features to feature sets $F_1, \dots, F_{nf}$}
	\State \textit{Partition $\mathcal{D}$ to data blocks $D_{i,j}$ containing samples from $S_i$ and $F_j$, $\forall i,j$}
    \State 
	\State $\mathbf{S} \gets \emptyset$ \Comment{\textit{No selected variables}}
    \State $run \gets 1$ \Comment{\textit{First run}}
    \State
    \State \Comment{\textit{Iterate until (a) maximum number of runs reached, or (b) selected features $\mathbf{S}$ did not change}}
	\While{$run \leq maxRuns$ $\wedge$ $\mathbf{S}$ changes}
    \State $\mathbf{S} \gets$ \textproc{OneRun}($D$, $T$, $\mathbf{S}$)
    \State $run \gets run + 1$
   	\EndWhile
	\State {\bfseries return} $\mathbf{S}$
    \algrule
    \State \textbf{function} \textproc{OneRun}(Data Blocks $D$, Target $T$, Selected Variables $\mathbf{S}$, Maximum Number of Variables To Select maxVars)
	\State $\mathbf{R} \gets \mathbf{F} \setminus \mathbf{S}$ \Comment{\textit{All variables remaining}}
    \State \Comment{\textit{Forward phase: iterate until (a) maximum number of variables selected or (b) no new variable has been selected}}
	\While{$|\mathbf{S}| < \mathit{maxVars}$ $\wedge$ $\mathbf{S}$ changes}
    \State $\langle \mathbf{S}, \mathbf{R}\rangle \gets$ \textproc{ForwardIteration}($D$, $T$, $\mathbf{S}$, $\mathbf{R}$)
   	\EndWhile
    \State
	\State \Comment{\textit{Backward phase: iterate until no variable can be removed}}
	\While{$\mathbf{S}$ changes}
    \State $\mathbf{S} \gets$ \textproc{BackwardIteration}($D$, $T$, $\mathbf{S}$)
   	\EndWhile
	\State {\bfseries return} $\mathbf{S}$
    \end{algorithmic}
\end{algorithm}

We present the proposed Parallel Forward-Backward with Pruning (PFBP) algorithm, shown in Algorithm~\ref{alg:fbs:general}. 
To improve readability, several arguments are omitted from function calls.
PFBP takes as input a dataset $\mathcal{D}$ and the target variable of interest $T$.
Initially the number of Sample Sets $\mathit{ns}$ and number of Feature Sets $\mathit{nf}$ are determined as described in Section~\ref{sec:parameters}.
Then, (a) the samples are randomly assigned to Sample Sets $S_1, \dots, S_{\mathit{ns}}$, to avoid any systematic biases (see also Section~\ref{sec:pfbp:practical:fisher}), (b) the Sample Sets $S_1, \dots, S_{\mathit{ns}}$ are assigned to $Q$ approximately equal-sized Groups, $G_1, \dots, G_Q$, (c) the features are assigned to feature sets $F_1, \dots, F_{\mathit{nf}}$, in order of occurrence in the dataset, and (d) the dataset $\mathcal{D}$ is partitioned into data blocks $D_{i,j}$, with each such block containing samples and features corresponding to sample set $S_i$ and feature set $F_j$ respectively.
The selected variables $\mathbf{S}$ are initialized to the empty set.
The main loop of the algorithm performs up to $\mathit{maxRuns}$ Runs, as long as the selected variables $\mathbf{S}$ change.
Each such Run executes a forward and a backward Phase.

The $\textproc{OneRun}$ function takes as input a set of data blocks $D$, the target variable $T$, a set of selected variables $\mathbf{S}$, and a limit on the number of variables to select $\mathit{maxVars}$.
It initializes the set of remaining variables $\mathbf{R}$ to all non-selected variables $\mathbf{F} \setminus \mathbf{S}$.
Then, it executes the forward and backward Phases.
The forward (backward) Phase executes forward (backward) Iterations until some stopping criteria are met.
Specifically, the forward Phase terminates if the maximum number of variables $\mathit{maxVars}$ has been selected, or until no more variable can be selected, while the backward Phase terminates if no more variables can be removed from $\mathbf{S}$.

\begin{algorithm}[t!]
\caption{\textproc{ForwardIteration}}
\label{alg:fbs:forwarditeration}
\begin{algorithmic}[1]
	\Require Data Blocks $D$, Target $T$, Selected Variables $\mathbf{S}$, Remaining Variables $\mathbf{R}$
    \Ensure Selected Variables $\mathbf{S}$, Remaining Variables $\mathbf{R}$
	\State $\mathbf{A}$ $\gets$ $\mathbf{R}$ \Comment{\textit{Initialize Alive Variables}}
    \State $\Pi$ \Comment{\textit{Array of log-pvalues, initially empty}}
    \State $\Lambda$ \Comment{\textit{Array of log-likelihoods, initially empty}}
    \State $q \gets 1$ \Comment{\textit{Initialize current Group counter}}
    \State $Q \gets \#Groups$ \Comment{\textit{Set Q to the total number of Groups}}
    \State
    \While{$q \leq Q$}
    \State \Comment{\textit{Process the alive features $\mathbf{A}$ for all data blocks containing sample sets in $G_q$ (denoted as $D_q$) in parallel in workers for the given $T$, $\mathbf{S}$ and $\mathbf{A}$, compute sub-matrices $\Pi_{q}$ and $\Lambda_{q}$ from each block, and append results to $\Pi$ and $\Lambda$}}
    \State $\langle\Pi_{q}, \Lambda_{q}\rangle \gets$ \textproc{TestParallel}$(D_q, T, \mathbf{S}, \mathbf{A})$
    \State $\langle \mathbf{R}, \mathbf{A} \rangle$ $\gets$ \textproc{EarlyDropping}$(\Pi, \mathbf{R}, \mathbf{A})$
    \State $\mathbf{A}$ $\gets$ \textproc{EarlyStopping}$(\Pi, \mathbf{A})$
    \State $\mathbf{A}$ $\gets$ \textproc{EarlyReturn}$(\Lambda, \mathbf{A})$
    \State Update $\Pi$ and $\Lambda$ (Retain only columns of alive variables)
    \State \Comment{\textit{Stop if single variable alive}}
    \If{$|\mathbf{A}| \leq 1$}
    \State {\bfseries break}
    \EndIf
    \State $q \gets q + 1$
    \EndWhile
    \State 
    \If{$|\mathbf{A}| > 0$}
      \State $\pi \gets$ \textproc{Combine}$(\Pi)$ \Comment{\textit{Compute final combined $p$-value for all alive variables}}
      \State $X_{best}$ $\gets$ $\argmin\limits_{X_i \in \mathbf{A}}$ $\pi_i$ \Comment{\textit{Identify the best variable $X_{best}$}}
      \State \Comment{\textit{Select $X_{best}$ if dependent with $T$ given $\mathbf{S}$}}
      \If{$\pi_{best}$ $\leq$ $\alpha$}
      \State $\mathbf{S}$ $\gets$ $\mathbf{S} \cup \{X_{best}\}$
      \State $\mathbf{R}$ $\gets$ $\mathbf{R} \setminus \{X_{best}\}$
      \EndIf
    \EndIf
    \State {\bfseries return} $\langle \mathbf{S}, \mathbf{R} \rangle$
\end{algorithmic}
\end{algorithm}

\begin{algorithm}[t!]
\caption{\textproc{BackwardIteration}}
\label{alg:fbs:backwarditeration}
\begin{algorithmic}[1]
	\Require Data Blocks $D$, Target $T$, Selected Variables $\mathbf{S}$
    \Ensure Selected Variables $\mathbf{S}$, Remaining Variables $\mathbf{R}$
	\State $\mathbf{A}$ $\gets$ $\mathbf{S}$ \Comment{\textit{Initialize Alive Variables}}
    \State $\Pi$ \Comment{\textit{Array of log-pvalues, initially empty}}
    \State $q \gets 1$ \Comment{\textit{Initialize current Group counter}}
    \State $Q \gets \#Groups$ \Comment{\textit{Set Q to the total number of Groups}}
    \State
    \While{$q \leq Q$}
    \State \Comment{\textit{Process the alive features $\mathbf{A}$ for all data blocks containing sample sets in $G_q$ (denoted as $D_q$) in parallel in workers for the given $T$, $\mathbf{S}$ and $\mathbf{A}$, compute sub-matrix $\Pi_q$ from each block, and append it to $\Pi$}}
    \State $\Pi_{q} \gets$ \textproc{TestParallel}$(D_q, T, \mathbf{S}, \mathbf{A})$
    \State $\mathbf{A}$ $\gets$ \textproc{EarlyStoppingBackward}$(\Pi, \mathbf{A})$
    \State Update $\Pi$ (Retain only columns of alive variables)
    \State \Comment{\textit{Stop if single variable alive}}
    \If{$|\mathbf{A}| \leq 1$}
    \State {\bfseries break}
    \EndIf
    \State $q \gets q + 1$
    \EndWhile
    \State 
    \If{$|\mathbf{A}| > 0$}
      \State $\pi \gets$ \textproc{Combine}$(\Pi_q)$ \Comment{\textit{Compute final combined $p$-value for all alive variables}}
      \State $X_{worst}$ $\gets$ $\argmax\limits_{X_i \in \mathbf{A}}$ $\pi_i$ \Comment{\textit{Identify the worst variable $X_{worst}$}}
      \State \Comment{\textit{Remove $X_{worst}$ if independent with $T$ given $\mathbf{S} \setminus X_{worst}$}}
      \If{$\pi_{worst}$ $>$ $\alpha$}
      \State $\mathbf{S}$ $\gets$ $\mathbf{S} \setminus \{X_{worst}\}$
      \EndIf
    \EndIf
    \State {\bfseries return} $\mathbf{S}$
\end{algorithmic}
\end{algorithm}

The forward and backward Iteration procedures are shown in Algorithms~\ref{alg:fbs:forwarditeration} and \ref{alg:fbs:backwarditeration}.
$\textproc{ForwardIteration}$ takes as input the data blocks $D$, the target variable $T$ as well as the current sets of remaining and selected variables, performs a forward Iteration and outputs the updated sets of selected and remaining variables.
It uses the variable set $\mathbf{A}$ to keep track of all alive variables, i.e. variables that are candidates for selection.
The arrays $\Pi$ and $\Lambda$ contain the local log $p$-values and log-likelihoods, containing $\mathit{ns}$ rows (one for each sample set) and $|\mathbf{A}|$ columns (one for each alive variable).
The values of $\Pi$ and $\Lambda$ are initially empty, and are filled gradually after processing Groups.
We use $D_q$ to denote all data blocks which corresponds to Sample Sets contained in Group $G_q$.
Similarly, accessing the values of $\Pi$ and $\Lambda$ corresponding to Group $q$ and variables $\mathbf{X}$ is denoted as $\Pi_q$ and $\Lambda_q$.

In the main loop, the algorithm iteratively processes Groups in a synchronous fashion, until all Groups have been processed or no alive variable remains.
The $\textproc{TestParallel}$ function takes as input the data blocks $D_q$ corresponding to the current Group $G_q$, and performs all necessary independence tests in parallel in workers.
The results, denoted as $\Pi_q$ and $\Lambda_q$ are then appended to the $\Pi$ and $\Lambda$ matrices respectively.
After processing a Group, the tests for Early Dropping, Early Stopping and Early Return are performed, using all local $p$-values computed up to Group $q$; details about the implementation of the \textproc{EarlyDropping}, \textproc{EarlyStopping} and \textproc{EarlyReturn} algorithms when data have only been partially processed are given in Section~\ref{sec:pruning}.
The values of non-alive features are then removed from $\Pi$ and $\Lambda$ (see also Figure~\ref{fig:trace}(f) for an example).
If only a single alive variable remains, processing stops.
Note that, this is not checked in the while loop condition, in order to ensure that at least one Group has been processed if the input set of remaining variables contains a single variable.
Finally, the best alive variable (if such a variable exists) is selected if it is conditionally dependent with $T$ given the selected variables $\mathbf{S}$.
Conditional dependence is determined by using the $p$-value resulting from combining all local $p$-values available in $\Pi$.
$\textproc{BackwardIteration}$ is similar to \textproc{ForwardIteration} with the exception that (a) the remaining variables are not needed, and thus no dropping is performed, (b) no early return is performed, and (c) the tests are reversed, i.e. the worst variable is removed.

\subsection{Massively-Parallel Predictive Modeling}
\label{sec:lrmodel}
The technique of combining locally computed $p$-values to global ones to massively parallelize computations, can be applied not only for feature selection, but also for predictive modeling. This way, at the end of the feature selection process one could obtain an approximate predictive model with no additional overhead! We exploit this opportunity in the context of independence tests implemented by logistic regression. During the computation of local $p$-values $\pi_{i,k}$ a (logistic) model for $T$ using all selected features $\mathbf{S}$ is produced from the samples in $S_i$. Such a model computes a set of coefficients $\beta_{i}$ that weighs each feature in the model to produce the probability that $T = 1$. 
Methods for combining multiple models, such as the ones considered here, are described in \cite{Becker2007}.
We used the weighted univariate least squares (WLS) approach \cite{Hedges1998}, with equal weights for each model; multivariate approaches may be more accurate and can also be applied in our case without any significant computational overhead, but were not further considered in this work. 
The WLS method with equal weights combines the local models to a global one $\hat{\beta}$ by just taking the average of the coefficient vectors of the model , i.e., $\hat{\beta} = \frac{1}{N}\sum_{i=1}^N \beta_i$. Thus, the only change to the algorithm is to cache each $\beta_{i}$ and average them in the master node. 
By default, PFBP uses the WLS method to construct a predictive model at each forward Iteration.

Using the previous technique, one could obtain a model at the end of each Iteration and assess its predictive performance (e.g., accuracy) on a hold-out validation set. Constructing for instance the graph of the number of selected features versus achieved predictive performance on the hold-out set could visually assist data analysts \cite{Konda2013} in determining how many features to include in the final selections; an example application on SNP data is given in the experimental section. An automated criterion for selecting the best trade-off between the number of selected features and the achieved predictive performance could also be devised, although this is out of the scope of this paper, as multiple testing has to be taken into consideration. 

\section{Implementation of the Early Dropping, Stopping and Return Heuristics using Bootstrap Tests on Local p-values}
\label{sec:pruning}
Recall that the algorithm processes Group Samples sequentially.
After processing each Group and collecting the results, PFBP applies the Early Dropping, Early Stopping and Early Return heuristics, computed on the master node, to filter out variables and reduce subsequent computation.
Thus, all three heuristics involve making early probabilistic decisions based on a subset of the samples examined so far.
Naturally, if all samples have been processed, Early Dropping can be applied on the combined $p$-values without making probabilistic decisions.

Before proceeding with the details, we provide the notation used hereafter.
Let $\Pi$ and $\Lambda$ be 2-dimensional arrays containing $K$ local log $p$-values and log-likelihoods for all alive variables in $\mathbf{A}$ and for all Groups already processed.
The matrices reside on the master node, and are updated each time a Group is processed.
Let $\pi_{i,j}$ and $\lambda_{i,j}$ denote the $i$-th value of the $j$-th alive variable, denoted as $X_j$.
Recall that those values have been computed locally on the Data Block containing samples from Sample Set $S_i$.
For the sake of simplicity, we will use $\pi_j$ and $\lambda_j$ ($l_j$) to denote the combined $p$-value and sum of log-likelihoods (likelihood) respectively of variable $X_j$.
The vectors $\pi$ and $\lambda$ will be used to refer to the combined $p$-values and sum of log-likelihoods for all alive variables respectively.
Also, let $X_{best}$ be the variable that would have been selected if no more data blocks were evaluated, that is, the one with the currently lowest combined $p$-value, denoted as $\pi_{best}$.

\subsection{Bootstrap Tests for Early Probabilistic Decisions}
In order to make early probabilistic decisions, we test: (a) $P(\pi_j \geq \alpha) > P_{drop}$ for Early Dropping of $X_j$ ($\alpha$ is the significance level), (b) $P(\pi_j > \pi_{best}) > P_{stop}$ for Early Stopping of $X_j$, and (c) $\forall X_j, (P(l_{best}/l_j \geq t)  > P_{return})$ for Early Return of $X_{best}$ (i.e., the probability is larger than the threshold for all variables), where $t$ is a tolerance parameter that determines how close the model with $X_{best}$ is to the rest in terms of how well it fits the data, and takes values between 0 and 1; the closer it is to 1, the closer it is in terms of performance to all other models.
By taking the logarithm, (c) can be rewritten as $\forall X_j, P(\lambda_{best} - \lambda_j \geq lt)$, where $lt = \log(t)$.

We employed bootstrapping to test the above.
A bootstrap-sample $b$ of $\Pi$ ($\Lambda$), denoted as $\Pi^b$ ($\Lambda^b$), is created by sampling with replacement $K$ \textbf{rows} from $\Pi$ ($\Lambda$).
Then, for each such sample, the Fisher's combined $p$-values (sum of log-likelihoods) are computed, by summing over all respective values for each alive variable; we refer to the vector of combined $p$-values (log-likelihoods) on bootstrap sample b as $\pi^b$ ($\lambda^b$), and the $i$-th element is referred to as $\pi_i^b$ ($\lambda_i^b$).
By performing the above $B$ times, probabilities (a), (b) and (c) can be estimated as:

\begin{equation}
  \tag{Early Dropping}
P(\pi_j \geq \alpha) = \frac{\textproc{I}(\pi_j \geq \alpha) + \sum_{b=1}^B \textproc{I}(\pi_j^b \geq \alpha)}{B + 1}
\end{equation}

\begin{equation}
  \tag{Early Stopping}
P(\pi_j > \pi_{best}) = \frac{\textproc{I}(\pi_j > \pi_{best}) + \sum_{b=1}^B \textproc{I}(\pi_j^b > \pi_{best}^b)}{B + 1}
\end{equation}

\begin{equation}
  \tag{Early Return}
P(\lambda_{best} - \lambda_j \geq lt) = \frac{\textproc{I}(\lambda_{best} - \lambda_j \geq lt) + \sum_{b=1}^B \textproc{I}(\lambda_{best}^b - \lambda^b_j \geq lt)}{B + 1}
\end{equation}

where $\textproc{I}$ is the indicator function, which evaluates to 1 if the inequality holds and to 0 otherwise.
For all of the above, the condition is also computed on the original sample, and the result is divided by the number of bootstrap iterations $B$ plus 1.
Note that, for Early Return the above value is computed for all variables $X_j$.

Algorithms~\ref{alg:drop},\ref{alg:stop} and \ref{alg:return} show the procedures in more detail.
For all heuristics, a vector named $cnts$ is used to keep track of how often the inequality is satisfied for each variable.
To avoid cluttering, the indicator function $\textproc{I}$ performs the check for multiple variables and returns a vector of values in each case, containing one value for each variable.
The function $\textproc{BootstrapSample}$ creates a bootstrap sample as described above, function $\textproc{Combine}$ uses Fisher's combined probability test to compute a combined $p$-value, and $\textproc{SumRows}$ sums over all rows of the log-likelihoods contained in $\Lambda$, returning a single value for each alive variable.

\begin{algorithm}[t!]
\caption{\textproc{EarlyDropping}}
\label{alg:drop}
\begin{algorithmic}[1]
    \Require Log $p$-values $\Pi$, Remaining Variables $\mathbf{R}$, Alive Variables $\mathbf{A}$, Number of Bootstrap Samples $B$, Significance Level Threshold $\alpha$, ED Threshold $P_{drop}$
    \Ensure Remaining variables $\mathbf{R}$, Alive Variables $\mathbf{A}$
    \State $\pi$ $\gets$ \textproc{Combine}($\Pi$)  \Comment{\textit{Combine log $p$-values $\Pi$ using Fisher's c.p.t.}}
    \State cnts $\gets$ $0^{|\mathbf{A}|}$ \Comment{\textit{Count vector of size equal to the number of alive variables}}
    \State cnts $\gets$ cnts + \textproc{I}($\pi \geq \alpha)$
    \For{b = 1 to B}
    \State $\Pi^b$ $\gets$ \textproc{BootstrapSample}$(\Pi)$
    \State $\pi^b$ $\gets$ \textproc{Combine}($\Pi^b$)  \Comment{\textit{Combine log $p$-values $\Pi^b $using Fisher's c.p.t.}}
    \State cnts $\gets$ cnts + \textproc{I}($\pi^b \geq \alpha)$
    \EndFor
    \State \Comment{\textit{Drop variables if $p$-value larger than $\alpha$ with probability at least $P_{drop}$}}
    \State $\mathbf{R} \gets \mathbf{R} \setminus \{X_i \in \mathbf{A} : cnts_i / (B+1) \geq P_{drop}\}$
    \State $\mathbf{A} \gets \mathbf{A} \setminus \{X_i \in \mathbf{A} : cnts_i / (B+1) \geq P_{drop}\}$
	\State {\bfseries return} $\langle \mathbf{R}, \mathbf{A} \rangle$
\end{algorithmic}
\end{algorithm}

\begin{algorithm}[t!]
\caption{\textproc{EarlyStopping}}
\label{alg:stop}
\begin{algorithmic}[1]
	\Require Log $p$-values $\Pi$, Alive Variables $\mathbf{A}$, Number of Bootstrap Samples $B$, ES Threshold $P_{stop}$
    \Ensure Alive Variables $\mathbf{A}$
    \State $\pi$ $\gets$ \textproc{Combine}($\Pi$)  \Comment{\textit{Combine log $p$-values $\Pi$ using Fisher's c.p.t.}}
    \State $X_{best}$ $\gets$ $\argmin\limits_{X_i \in \mathbf{A}}$ $\pi_i$ \Comment{\textit{Identify variable with minimum Fisher's combined $p$-value}}
    \State cnts $\gets$ $0^{|\mathbf{A}|}$ \Comment{\textit{Count vector of size equal to the number of alive variables}}
    \State cnts $\gets$ cnts + \textproc{I}($\pi_{best} < \pi$)
    \For{b = 1 to B}
    \State $\Pi^b$ $\gets$ \textproc{BootstrapSample}$(\Pi)$
    \State $\pi^b$ $\gets$ \textproc{Combine}($\Pi^b$)  \Comment{\textit{Combine log $p$-values $\Pi^b $using Fisher's c.p.t.}}
    \State cnts $\gets$ cnts + \textproc{I}($\pi^b_{best} < \pi^b$)
    \EndFor
    \State \Comment{\textit{Exclude variables from $\mathbf{A}$ that are worse than $V_{best}$ with probability at least $P_{stop}$}}
    \State $\mathbf{A} \gets \mathbf{A} \setminus \{X_i \in \mathbf{A} : cnts_i / (B+1) \geq P_{stop}\}$
	\State {\bfseries return} $\mathbf{A}$
\end{algorithmic}
\end{algorithm}

\begin{algorithm}[t!]
\caption{\textproc{EarlyReturn}}
\label{alg:return}
\begin{algorithmic}[1]
	\Require Log-likelihoods $\Lambda$, Alive Variables $\mathbf{A}$, Number of Bootstrap Samples $B$, ER Threshold $P_{return}$, ER Tolerance $lt$
    \Ensure Alive Variables $\mathbf{A}$
    \State $\pi$ $\gets$ \textproc{Combine}($\Pi$)  \Comment{\textit{Combine log $p$-values $\Pi$ using Fisher's c.p.t.}}
    \State $\lambda$ $\gets$ \textproc{SumRows}($\Lambda$)  \Comment{\textit{Sum rows of log-likelihoods $\Lambda$}}
    \State $X_{best}$ $\gets$ $\argmin\limits_{X_i \in \mathbf{A}}$ $\pi_i$ \Comment{\textit{Identify variable with minimum Fisher's combined $p$-value}}
    \State cnts $\gets$ $0^{|\mathbf{A}|}$ \Comment{\textit{Count vector of size equal to the number of alive variables}}
    \State cnt $\gets$ cnts + \textproc{I}$(\lambda_{best} - \lambda > lt)$
    \For{b = 1 to B}
    \State $\Lambda^b$ $\gets$ \textproc{BootstrapSample}$(\Lambda)$
    \State $\lambda^b$ $\gets$ \textproc{SumRows}($\Lambda$)  \Comment{\textit{Sum rows of log-likelihoods $\Lambda^b$}}
    \State cnts $\gets$ cnts + \textproc{I}$(\lambda^b_{best} - \lambda^b > lt)$
    \EndFor
    \State \Comment{\textit{Select $X_{best}$ early if better than all other variables with probability at least $P_{return}$}}
    \If{$\forall i, cnts_i / (B+1) \geq P_{return}$}
    \State $\mathbf{A}$ $\gets$ $\{X_{best}\}$
    \EndIf
	\State {\bfseries return} $\mathbf{A}$
\end{algorithmic}
\end{algorithm}

\subsection{Implementation Details of Bootstrap Testing}
\label{sec:algheuristics}
We recommend using the same sequence of bootstrap indices for each variable, and for each bootstrap test.
The main reasons are to (a) simplify implementation, (b) avoid mistakes and (c) ensure results do not change across different executions of the algorithms.
This can be done by initializing the random number generator with the same seed.
Next, note that ED, ES and ER do not necessarily have to be performed separately, but can be performed simultaneously (i.e,. using the same bootstrap samplings).
This allows the re-usage of the sampled indices for all tests and variables, saving some computational time.
Another important observation for ED and ES is that the actual combined $p$-values are not required.
It suffices to compare statistics instead, which are inversely related to $p$-values: larger statistics correspond to lower $p$-values.
For the ED test, the statistic has to be compared to the statistic corresponding to the significance level $\alpha$, which can be computed using the inverse $\chi^2$ cumulative distribution.
This is crucial to speed-up the procedure, as computing log $p$-values is computationally expensive.
Finally, note that the exact probabilities for the tests are not required, and one can often decide earlier if a probability is smaller than the threshold.
For example, let $P_{drop} = 0.99$ and $B = 999$.
Then, in order to drop a variable $X_i$, the number of times $cnts_i$ where the $p$-value of $X_i$ exceeds $\alpha$ has to be at least 990.
If after $K$ iterations $(B - K) + cnts_i$ is less than 990, one can determine that $X_i$ will not be dropped; even if in all remaining bootstrap iterations its $p$-value is larger than $\alpha$, $cnts_i + B - K$ will always be less than 990, and thus the probability $P(\pi_{i} \geq \alpha)$ will be less than the threshold $P_{drop} = 0.99$.

Finally, we note that, in order to minimize the probability of wrong decisions, large values for the ED, ES and ER thresholds should be used.
We found that values of $0.99$ for $P_{drop}$ and $P_{stop}$, and values of $P_{return} = 0.95$ and $tol = 0.9$ work well in practice.
Furthermore, the number of bootstraps $B$ should be as large as possible, with a minimum recommended value of 500.
By default, PFBP uses the above values.

\section{Tuning the Data Partitioning Parameters of the Algorithm}
\label{sec:parameters}
The main parameters for the data partitioning to determine are (a) the sample size $s$ of each Data Block, (b) the number of features $f$ in each Data Block, and (c) the number of Sample Subsets $C$ in each Group; the latter determines how many new $p$-values per feature are computed in each Group. 
Notice that {\em $s$ determines the horizontal partitioning of the data matrix and $f$ the vertical partitioning of data matrix}.
In general, the parameters are set so that Blocks are as small as possible to achieve high parallelization, without sacrificing feature selection accuracy: if the number of samples is too low, the local tests will have low power.
In this section, we provide detailed guidelines to determine those parameters, and show how those values were set for the special case of PFBP using conditional independence tests based on binary logistic regression.

\subsection{Determining the Required Sample Size for Conditional Independence Tests}
\label{sec:minsample}

For optimal computational performance, the number of Sample Sets should be as large as possible to increase parallelism, and each Sample Set should contain as few samples as possible to reduce the computational cost for performing the local conditional independence tests.
Of course, this should be done without sacrificing the accuracy of feature selection: if the number of samples is too low, the local tests will have low power.

Various rules of thumb have appeared in the literature to choose a sufficient number of samples for linear, logistic and Cox regression \cite{Peduzzi1996, RegressionModellingStrategies2001, Vittinghof2007}.
We focus on the case of binary logistic regression hereafter.
For binary logistic regression, it is recommended to use at least $s = c/\min(p_0,p_1) \cdot df$ samples, where $p_0$ and $p_1$ are the proportion of negative and positive classes in $T$ respectively, $df$ is the number of degrees of freedom in the model (that is, the total number of parameters, including the intercept term) and $c$ is usually recommended to be between 5 and 20, with larger values leading to more accurate results.
This rule is based on the events per variable (EPV) \cite{Peduzzi1996}, and will referred to as the EPV rule hereafter.

Rules like the above can be used to determine the number of samples $s$ in each Sample Set, by setting the minimum number of samples in each Data Block in a way that the locally computed $p$-values are valid for the type of test employed {\em in the worst case}. 
The worst case scenario occurs if the maximum number of features $\mathit{maxVars}$ have been selected.
If all features are continuous $df$ equals $\mathit{maxVars} + 1$.
This can easily be adapted for the case of categorical features, by considering the $\mathit{maxVars}$ variables with the most categories, and setting $df$ appropriately.
By considering the worst case scenario, the required number of samples can be computed by plugging the values of $df$, $c$, $p_0$ and $p_1$ into the EPV rule.
We found out that, although the EPV rule works reasonably well, it tends to overestimate the number of samples required for skewed class distributions.
As a result, it may unnecessarily slow down PFBP in such cases. 
Ideally for a given value of $c$ the results should be equally accurate irrespective of the class distribution and the number of model parameters.

To overcome the drawbacks of the EPV rule, we propose another rule, called the STD rule, which is computed as $s = df \cdot c/\sqrt{p_0 \cdot p_1}$.
For balanced class distributions the result is identical to the EPV rule, while for skewed distributions the value is always smaller.
We found that a value of $c = 10$ works sufficiently well, and recommend to always set $c$ to a minimum of 10; higher values could lead to more accurate results, but will also increase computation time.
Again, the number of samples per Sample Set is determined as described above.
A comparison of both rules is given in Appendix~\ref{app:minsample}.
We show that the STD rule behaves better across different values of $df$ and class distributions of the outcome than the EPV rule.

\subsection{Determining the Number of Features per Data Block}
Given the sample size $s$ of each data block $D_{i,j}$, the next hyper-parameter to decide is the number of features $f$ in each block. 
The physical partitioning to Feature Sets is performed so that each Block fits within the memory of a cluster node. 
Some \textit{physical partitioning is required only for ultra-high dimensional datasets and it was never launched in our implementation for the Sample Sets sizes of the datasets employed in our experiments}. 
In practice, features need to be partitioned only virtually, which is computationally cheaper. 
Specifically, enough (virtual) feature sets $\mathit{nf}$ are created so that the number of Data Blocks in a Group (i.e., $C \times \mathit{nf}$) is as close as possible to a desired oversubscription-of-machines parameter $o$.
The $o$ parameter dictates the average number of Blocks (tasks) assigned to a machine per Group. 
By default, the value of $o$ is set to 1.
In other words, we set $\mathit{nf} = \lfloor o \cdot M / C \rfloor$, where $M$ is the number of available machines, so that each machine processes at least $o$ blocks per Group.

\subsection{Setting the Number of Groups $C$}
We now discuss the determination of the $C$ value, the number of Sample Sets in each Group. 
Recall that, the value of $C$ determines how many Sample Sets are processed in parallel, (and thus, how many additional local $p$-values are added to the $p$-value matrix $\Pi$), before invoking bootstrap tests that decide on Early Dropping, Stopping or Return.
We set $C = 15$, as it allows enough local $p$-values for a bootstrap test to be performed in the first Group. 
Smaller values would invoke the bootstrap test more often and present more opportunities for Early Drop, Stop and Return, but would also reduce the parallelization of the algorithm (since computations await for the results of the bootstrap). 
The value 15 was chosen (without extensive experimentation) as a good trade-off between the two resources.

\section{Practical Considerations and Implementation Details}
\label{sec:pfbp:practical}

In this section, we discuss several important details for an efficient and accurate implementation of PFBP.
The main focus is on PFBP using conditional independence tests based on binary logistic regression, which is the test used in the experiments, although most details regard the general case or can be adapted to other conditional independence tests.

\subsection{Accurate Combination of Local $p$-values Using Fisher's method}
\label{sec:pfbp:practical:fisher}
In order to apply Fisher's combined probability test, {\em the data distributions of each data block should be the same for the test to be valid}. 
There should be no systematic bias on the data or the combining process may exacerbate this bias (see \cite{Tsamardinos2009}). 
Such bias may occur if blocks contain data from the same departments, stores, or branches, or in consecutive time moments and there is time-drift on the data distribution. 
This problem is easily avoided if before the analysis the partitioning of samples to blocks is done randomly, as done by PFBP.

Another important detail to observe in practice, is to \textit{directly compute the logarithm of the $p$-values for each conditional independence test} instead of first computing the $p$-value and then taking the logarithm.
As $p$-values tend to get smaller with larger sample sizes (in case the null hypothesis does not hold), they quickly reach the machine epsilon, and will be rounded to zero.
If this happens, then sorting and selecting features according to $p$-values breaks down and PFBP will select an arbitrary feature. 
This behavior is further magnified in case of combined $p$-values, as a single zero local $p$-value leads to a zero combined $p$-value no matter the values of the remaining $p$-values.

\subsection{Adapting the Number of Processed Groups to Improve Computational Efficiency}
In practice, we found that processing a single Group in each iteration of the $\textproc{ForwardIteration}$ algorithm may be slow in cases where no variables are dropped or stopped for multiple consecutive iterations.
In such cases, the algorithm will spend a large amount of time performing the bootstrap tests, even though in most cases no variables are removed from $\mathbf{R}$ or $\mathbf{A}$.
This can become especially problematic if the number of sample sets $\mathit{ns}$ is very large.
For this reason, we allow the algorithm to increase the Group size, if after processing two consecutive Groups the alive and remaining variables remain the same.
Specifically, we found that doubling the Group size works well in practice.
This is identical to doubling the number of Groups processed, and thus minimal changes are required in the algorithm.
One needs to keep track of the number of Groups to process in each iteration, and double that value if the alive and remaining variables do not change after the bootstrap tests.
Finally, we note that the value of $C$ is reset to the default value (15 in our case, see Section~\ref{sec:parameters}) after each forward Iteration.

\subsection{Implementation of the Conditional Independence Test using Logistic Regression for Binary Targets}
The conditional independence test is the basic building block of PFBP, and thus using a fast and robust implementation is essential. 
Next, we briefly review optimization algorithms used for maximum likelihood estimation, mainly focusing on binary logistic regression, and in the context of feature selection using likelihood-ratio tests.

A comprehensive introduction and comparison of algorithms for fitting (i.e., finding the $\beta$ that maximizes the likelihood) binary logistic regression models is provided in \cite{Minka2003}.
Three important classes of optimization algorithms are Newton's method, conjugate gradient descent and quasi-Newton methods.
Out of those, Newton's method is the most accurate and typically converges to the optimal solution in a few tens of iterations.
The main drawback is that each such iteration is slow, requiring $O(n \cdot d^2)$ computations, where $n$ is the sample size and $d$ the number of features.
Conjugate gradient descent and quasi-Newton methods on the other hand require $O(n \cdot d)$ and $O(n \cdot d + d^2)$ time per iteration, but may take much longer to converge.
Unfortunately, there are cases were those methods fail to converge to an optimal solution even after hundreds of iterations.
This not only affects the accuracy of feature selection, but also leads to unpredictable running times.
Most statistical packages include one or multiple implementations of logistic regression.
Such implementations typically use algorithms that can handle thousands of predictors, with quasi-Newton methods being a popular choice.
For feature selection however, one is typically interested to select a few tens or hundreds of variables.
In anecdotal experiments, we found that for this case Newton's method is usually faster and more accurate, especially with fewer than 100-200 variables.
Because of that, and because of the issues mentioned above, we used a fine-tuned, custom implementation of Newton's method.

There are some additional, important details.
First of all, there are cases where the Hessian is not invertible
\footnote{One case where this happens is if the covariance matrix of the input data is singular, or close to singular. Note that, due to the nature of the feature selection method which considers one variable at a time, this can happen only if the newly added variable is (almost) collinear with some of the previously selected variables. If this is the case, the variable would not be selected anyway.}.
If this the case, we switch to conjugate gradient descent using the fixed Hessian as a search direction for that iteration, as described in \cite{Minka2003}.
Finally, as a last resort, in case the fixed Hessian is not invertible we switch to simple gradient descent.
Next, for all optimization methods there are cases in which the computed step-size has to be adjusted to avoid divergence, whether it is due to violations of assumptions or numerical issues.
One way to do this is to use inexact line-search methods, such as backtracking-Armijo line search \cite{Armijo1966}, which was used in our implementation.

\subsection{Score Tests for the Univariate Case}
In the first step of forward selection where no variable has been selected, one can use a score test (also known as Lagrange multiplier test) instead of a likelihood-ratio test to quickly compute the $p$-value without having to actually fit logistic regression models.
The statistic of the Score test equals \cite{Hosmer2013}

$$ \text{Statistic} \equiv \frac{\sum_{j=1}^{n} X_j (T_j - \bar{T})}{\sqrt[]{\bar{T}(1 - \bar{T})\sum_{j=1}^{n} (X_j - \bar{X})^2}}$$
where $n$ is the number of samples, $T$ is the binary outcome variable (using a 0/1 encoding), and $X$ is the variable tested for independence.
Note that, such tests can also be derived for models other than binary logistic regression, but it is out of the scope of the paper.
The score test is asymptotically equivalent to the likelihood ratio test, and in anecdotal experiments we found that a few hundred samples are sufficient to get basically identical results, justifying its use in Big Data settings. 
Using this in place of the likelihood ratio test reduces the time of the univariate step significantly and is important for an efficient implementation, as the first step is usually the most computationally demanding one in the PFBP algorithm, as a large portion of the variables will be dropped by the Early Dropping heuristic.

\section{Optimality of PFBP on Distributions Faithful to Bayesian Networks and Maximal Ancestral Graphs}
\label{sec:theory}
Assuming an oracle of conditional independence, it can be shown that the standard Forward-Backward Selection algorithm is able to identify the optimal set of features for distributions faithful to Bayesian networks or maximal ancestral graphs \cite{Margaritis2000,Tsamardinos2003IAMB,Borboudakis2017}.
Unfortunately, the Early Dropping (ED) heuristic may compromise the optimality of the method.
ED may remove features that are necessary for optimal prediction of $T$. 
Intuitively, these features provide no predictive information for $T$ given $\mathbf{S}$ (are conditionally independent) but become conditionally \textit{dependent} given a superset of $\mathbf{S}$, i.e., after more features are selected. 
This problem can be overcome by using multiple Runs of the Forward-Backward Phases.
Recall that, each Run reinitializes the remaining variables with $\mathbf{R} = \mathbf{F} \setminus \mathbf{S}$.
Thus, each subsequent Run provides each feature with another opportunity to be selected, even if it was Dropped in a previous one.
The heuristic has a graphical interpretation in the context of probabilistic graphical models such as Bayesian networks and maximal ancestral graphs \cite{Pearl2000,Spirtes2000, SpirtesRichardson2002} inspired by modeling causal relations. 
A rigorous treatment of the Early Dropping heuristic and theorems regarding its optimality for distributions faithful to Bayesian networks and maximal ancestral graphs is provided in \citep{Borboudakis2017}; for the paper to be self-sustained, we provide the main theorems along with proofs next.

We assume that PFBP has access to an \textbf{independence oracle} that determines whether a given conditional dependence or independence holds.
Furthermore, we assume that the Markov and faithfulness conditions hold, which allow us to use the terms d-separated/m-separated and independent (dependent) interchangeably.
We will use the the \textbf{weak union} axiom, one of the \textbf{semi-graphoid} axioms \citep{Pearl2000} about conditional independence statements, which are general axioms holding in all probability distributions.
The weak union axiom states that $\condind{\mathbf{X}}{\mathbf{Y}\cup\mathbf{W}}{\mathbf{Z}} \Rightarrow \condind{\mathbf{X}}{\mathbf{Y}}{\mathbf{Z}\cup\mathbf{W}}$ holds for any such sets of variables.

\begin{theorem}\label{thm:ffbs1mb}
If the distribution can be faithfully represented by a Bayesian network, then PFBP with two runs identifies the Markov blanket of the target $T$.
\end{theorem}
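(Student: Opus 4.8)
The plan is to track the selected set $\mathbf{S}$ across the two Runs, using the graphical characterization of the Markov blanket of $T$ --- call it $\mathbf{M}$, the (unique, under faithfulness) set of parents, children and spouses of $T$ --- together with two elementary $d$-separation facts: (F1) a node adjacent to $T$ is $d$-connected to $T$ given \emph{every} conditioning set, since an edge cannot be blocked; (F2) a spouse $X$ of $T$ with a common child $C$ is $d$-connected to $T$ given \emph{any} set containing $C$, since the collider $X \to C \leftarrow T$ is then open. First I would reduce PFBP to its logical core under the independence oracle: Early Dropping removes from $\mathbf{R}$ exactly the features independent of $T$ given the current $\mathbf{S}$, while Early Stopping and Early Return only affect which Group is processed last and which of the currently dependent features is selected in a given Iteration --- they never remove a feature permanently and never prevent a forward Iteration from selecting \emph{some} feature while a remaining feature is still dependent. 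Consequently a forward Phase terminates precisely when every remaining (not-yet-dropped, not-yet-selected) feature is independent of $T$ given $\mathbf{S}$, a backward Phase terminates precisely when no selected feature is independent of $T$ given the rest of $\mathbf{S}$, and termination is clear since $\mathbf{S}$ is finite.

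\textbf{Step 1 (the first Run captures all parents and children of $T$).} By (F1) such a node is always dependent on $T$, so it is never Early-Dropped and, remaining a candidate, prevents the forward Phase of Run~1 from terminating until it has entered $\mathbf{S}$; thus afterwards $\mathbf{S}$ contains every parent and child of $T$ (and possibly some spouses and possibly some features outside $\mathbf{M}$, which I call ``extras''). The backward Phase of Run~1 removes none of the parents/children (not removable by (F1)) and no spouse that is present, since such a spouse's common child $C$ --- a child of $T$, hence in $\mathbf{S}$ --- stays in $\mathbf{S}\setminus\{X\}$, so (F2) applies. Hence at the start of Run~2, $\mathbf{S}$ still contains every parent and child of $T$.

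\textbf{Step 2 (Run 2 completes and trims $\mathbf{S}$ to $\mathbf{M}$).} The forward Phase of Run~2 only enlarges $\mathbf{S}$, which already contains every child of $T$; so by (F2) every spouse $X$ of $T$ is dependent on $T$ given $\mathbf{S}$ at every Iteration, is never Early-Dropped, and is eventually selected. Thus after this forward Phase $\mathbf{S} \supseteq \mathbf{M}$; write $\mathbf{S} = \mathbf{M} \cup \mathbf{E}$. For the backward Phase I would use the semi-graphoid axioms: from the Markov blanket property $\condind{\mathbf{V} \setminus \mathbf{M}}{T}{\mathbf{M}}$, decomposition and then the weak union axiom give $\condind{X}{T}{\mathbf{M} \cup (\mathbf{E} \setminus \{X\})}$, i.e.\ $\condind{X}{T}{\mathbf{S} \setminus \{X\}}$, for every $X \in \mathbf{E}$ --- so every extra is removable --- whereas no member of $\mathbf{M}$ is ever removable (parents/children by (F1); spouses by (F2), since their common child stays in $\mathbf{S}\setminus\{X\}\supseteq\mathbf{M}$). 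Because removing an extra preserves the invariant $\mathbf{S}\supseteq\mathbf{M}$, the backward Phase strips off all of $\mathbf{E}$ and halts exactly at $\mathbf{S}=\mathbf{M}$; as $\mathit{maxRuns}=2$, this is the returned set. (If $T$ is isolated, every Phase is empty and $\mathbf{M}=\emptyset$, which is returned trivially.)

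I expect the main obstacle to be Steps~1--2: arguing rigorously that precisely one additional Run suffices. The delicate points are (i) that the first Run is \emph{guaranteed} to pull in every node adjacent to $T$ even though Early Dropping acts at each Iteration, and that neither Early Dropping nor the intervening backward Phase can undo this afterwards; and (ii) that this is exactly the precondition --- ``every child of $T$ already selected'' --- keeping every spouse's collider path open throughout the second Run, so that no spouse is Early-Dropped there and all of $\mathbf{M}$ is recovered. The backward-Phase cleanup and the bookkeeping that neutralizes Early Stopping/Return are conceptually routine once these invariants are fixed, but must be stated carefully.
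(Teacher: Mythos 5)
Your proposal is correct and follows essentially the same route as the paper's proof: Run 1 selects all nodes adjacent to $T$ since no conditioning set $d$-separates them, Run 2 selects the spouses because the children are now in the conditioning set and open the collider paths, and the backward phase removes the extras via the weak union axiom applied to the Markov blanket independence while members of the blanket survive by the same $d$-connection facts. Your treatment is in fact somewhat more careful than the paper's on two points it glosses over --- the backward phase occurring already at the end of Run 1, and the reduction showing that Early Stopping/Early Return cannot interfere under the oracle --- but the underlying argument is the same.
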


\begin{proof}
In the first run of PFBP, all variables that are adjacent to $T$ (that is, its parents and children) will be selected, as none of them can be d-separated from $T$ by any set of variables.
In the next run, all variables connected through a collider path of length 2 (that is, the spouses of $T$) will become d-connected with $T$, since the algorithm conditions on all selected variables (including its children), and thus spouses will be selected as at least a $d$-connecting path is open: the path that goes through the selected child.
The resulting set of variables includes the Markov blanket of $T$, but may also include additional variables.
Next we show that all additional variables will be removed by the backward selection phase.
Let MB($T$) be the Markov blanket of $T$ and $\mathbf{S_{ind}} = \mathbf{S} \setminus $MB($T$) be all selected variables not in the Markov blanket of $T$.
By definition, $\condind{T}{\mathbf{X}}{MB(T)}$ holds for any set of variables $\mathbf{X}$ not in MB($T$), and thus also for variables $\mathbf{S_{ind}}$.
By applying the weak union graphoid axiom, one can infer that $\forall S_i \in \mathbf{S_{ind}}, \condind{T}{S_i}{MB(T) \cup \mathbf{S_{ind}} \setminus S_i}$ holds, and thus some variable $S_j$ will be removed in the first iteration.
Using the same reasoning and the definition of a Markov blanket, it can be shown that all variables in $\mathbf{S_{ind}}$ will be removed from MB($T$) at some iteration.
To conclude, it suffices to use the fact that variables in MB($T$) will not be removed by the backward selection, as they are not conditionally independent of $T$ given the remaining variables in MB($T$).
\qed
\end{proof}

\begin{theorem}\label{thm:ffbsinfmb}
If the distribution can be faithfully represented by a directed maximal ancestral graph, then PFBP with no limit on the number of runs identifies the Markov blanket of the target $T$.
\end{theorem}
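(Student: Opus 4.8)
The plan is to mirror the proof of Theorem~\ref{thm:ffbs1mb}, replacing ``$d$-separation'' by ``$m$-separation'' and ``collider paths of length at most $2$'' by collider paths of arbitrary finite length; this is precisely why a constant number of runs no longer suffices and why we must allow arbitrarily many. Recall that for a faithful distribution the (unique) Markov blanket $\mathrm{MB}(T)$ consists exactly of the vertices adjacent to $T$ together with the vertices reachable from $T$ by a collider path. I would first record two elementary facts. (i) Any truncation of a collider path emanating from $T$ is again a collider path emanating from $T$: if $T,V_1,\dots,V_{k-1},Y$ is a collider path, then for every $j$ the path $T,V_1,\dots,V_j$ is a collider path, so in particular $V_j\in\mathrm{MB}(T)$. (ii) If all the interior vertices $V_1,\dots,V_{k-1}$ of a collider path from $T$ to $Y$ lie in the conditioning set, that path $m$-connects $T$ and $Y$ (every non-endpoint is a collider and is conditioned on, and there is no non-collider on it); moreover, since the only non-endpoint vertices of this path are colliders, no enlargement of the conditioning set that avoids $T$ and $Y$ can block it.

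Next I would prove by induction on $k\ge 1$ the statement $H_k$: after the forward Phase of Run~$k$, every vertex of $\mathrm{MB}(T)$ joined to $T$ by a collider path of length at most $k$ has been selected, and once selected such a vertex is never removed by any later backward Iteration. For the base case $k=1$, a vertex adjacent to $T$ cannot be $m$-separated from $T$ by any set, hence is dependent on $T$ given $\mathbf{S}$ throughout Run~$1$; since under the oracle Early Dropping discards only genuinely independent variables and the forward Phase keeps adding the most strongly dependent remaining variable until none remains dependent, this vertex is selected, and since it is dependent on $T$ given \emph{every} set, no backward Iteration can ever remove it. For the inductive step, take $Y\in\mathrm{MB}(T)$ joined to $T$ by a collider path $T,V_1,\dots,V_{k-1},Y$. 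By fact~(i) each $V_i$ is joined to $T$ by a collider path of length $\le k-1$, so by $H_{k-1}$ each $V_i$ lies in $\mathbf{S}$ permanently from the end of Run~$k-1$ onward. Hence at the start of Run~$k$ (where $\mathbf{R}$ is re-initialised to $\mathbf{F}\setminus\mathbf{S}$) and throughout its forward Phase, fact~(ii) shows $Y$ is $m$-connected to $T$ given the current $\mathbf{S}$; the base-case argument then forces $Y$ to be selected during Run~$k$, and once selected $Y$ is never removed, since in any backward Iteration the conditioning set $\mathbf{S}\setminus\{Y\}$ still contains all the $V_i$ and thus $Y$ remains dependent on $T$. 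This establishes $H_k$.

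Because the graph is finite there is a finite bound $L$ on the length of a collider path, so by $H_L$ the forward Phase of Run~$L$ yields a selected set $\mathbf{S}$ with $\mathrm{MB}(T)\subseteq\mathbf{S}$. From here the argument of Theorem~\ref{thm:ffbs1mb} applies essentially verbatim: writing $\mathbf{S}_{\mathrm{ind}}=\mathbf{S}\setminus\mathrm{MB}(T)$, the definition of the Markov blanket gives $\condind{T}{\mathbf{S}_{\mathrm{ind}}}{\mathrm{MB}(T)}$, and the weak-union semi-graphoid axiom yields $\condind{T}{S_i}{\mathbf{S}\setminus\{S_i\}}$ for every $S_i\in\mathbf{S}_{\mathrm{ind}}$, so the backward Phase of Run~$L$ strips away $\mathbf{S}_{\mathrm{ind}}$ one variable at a time while no vertex of $\mathrm{MB}(T)$ is removed (each is dependent on $T$ given the remaining blanket members, which are always in the conditioning set). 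Thus $\mathbf{S}=\mathrm{MB}(T)$ at the end of Run~$L$; in Run~$L+1$ the forward Phase adds nothing (again by weak union, every non-selected variable is independent of $T$ given $\mathbf{S}\supseteq\mathrm{MB}(T)$) and the backward Phase removes nothing, so $\mathbf{S}$ no longer changes and PFBP halts having returned $\mathrm{MB}(T)$.

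I expect the main obstacle to be the induction of the second paragraph: making the permanence claim airtight (a blanket vertex, once selected, is never later dropped by a backward Phase \emph{even while other blanket vertices are still missing}), and making precise the implication ``$Y$ becomes $m$-connected to $T$ given $\mathbf{S}$'' $\Rightarrow$ ``$Y$ is actually selected in that run's forward Phase,'' which rests on the fact that the forward Phase continues until no remaining variable is conditionally dependent on $T$ and that Early Dropping under the oracle never discards a dependent variable. A minor additional point to verify is that the bound $L$ is legitimate (collider paths are simple paths in a finite graph) and that the per-run cap on the number of selected variables is taken large enough not to stop the forward Phase prematurely, exactly as in Theorem~\ref{thm:ffbs1mb}.
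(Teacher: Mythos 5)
Your proof is correct and follows essentially the same route as the paper's: an induction on the number of runs showing that after $k$ runs every Markov-blanket member reachable by a collider path of length $\le k$ is selected (because conditioning on the previously selected interior colliders $m$-connects it to $T$), followed by the weak-union argument of Theorem~\ref{thm:ffbs1mb} for the backward phase. Your version is in fact more careful than the paper's on a point it glosses over---the permanence of already-selected blanket members across the intermediate backward phases occurring at the end of each run---which you handle correctly via your fact~(ii).
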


\begin{proof}
In the first run of PFBP, all variables that are adjacent to $T$ (that is, its parents, children and variables connected with $T$ by a bi-directed edge) will be selected, as none of them can be m-separated from $T$ by any set of variables.
After each run additional variables may become admissible for selection.
Specifically, after $k$ runs all variables that are connected with $T$ by a collider path of length $k$ will become m-connected with $T$, and thus will be selected; we prove this next.
Assume that after $k$ runs all variables connected with $T$ by a collider path of length at most $k-1$ have been selected.
By conditioning on all selected variables, all variables with edges into some selected variable connected with $T$ by a collider path will become m-connected with $T$.
This is true because conditioning on a variable $Y$ in a collider $\langle X, Y, Z \rangle$ m-connects $X$ and $Z$.
By applying this on each variable on some collider path, it is easy to see that its end-points become m-connected.
Finally, after applying the backward selection phase, all variables that are not in the Markov blanket of $T$ will be removed; the proof is identical to the one used in the proof of Theorem~\ref{thm:ffbs1mb}.
\qed
\end{proof}

\section{Related Work}
\label{sec:related}
In this section we provide an overview of alternative parallel feature selection methods, focusing on methods for MapReduce-based systems (such as Spark), as well as causal-based methods, and compare them to PFBP.
An overview of feature selection methods can be found in \cite{Guyon2003} and \cite{Li2016Perspective}.

\subsection{Parallel Univariate Feature Selection and Parallel Forward-Backward Selection}
Univariate feature selection (UFS) applies only the first step of forward selection, ranks all features based on some ranking criterion, and selects either the top $\mathit{maxVars}$ variables or all features that satisfy some selection criterion.
An implementation for discrete data based on the chi-squared test of independence is provided in the Spark machine learning library MLlib \citep{Meng2016}.
In this case, all features are ranked based on the $p$-value of the test of unconditional independence with the outcome $T$, and features are selected by either choosing the top $\mathit{maxVars}$ ones, or all features with a $p$-value below a fixed significance level $\alpha$.
Although not explicitly mentioned as feature selection methods, MLlib also contains implementations of the Pearson and Spearman correlation coefficients, which can be used similarly to perform univariate feature selection with continuous features and outcome variables.
Furthermore, MLlib also contains implementations of binomial, multinomial and linear regression, which can be used both for univariate feature selection as well as for forward-backward selection (FBS), by performing likelihood-ratio tests.

The main advantages of PFBP over UFS and FBS are that (a) PFBP does not require specialized distributed implementations of independence tests, as it only relies on local computations and thus can use existing implementations, which is also much faster than fitting full models over all samples, and (b) it employs the Early Dropping, Early Stopping and Early Return heuristics, allowing it to scale both with number of features and samples. Perhaps, most importantly (c) UFS will not necessarily identify the Markov Blanket of $T$ even in faithful distributions; the solution by UFS will have false positives (e.g., redundant features) as well as false negatives (missed Markov Blanket features). 

\subsection{Single Feature Optimization}
The Single Feature Optimization algorithm (SFO) \citep{Singh2009} is a Mapreduce-based extension of the standard forward selection algorithm using binary logistic regression.
SFO (a) employs a heuristic that ranks the features at each step without the need to fit a full logistic regression model (that is, one over all samples) for all variables, and (b) uses a parallelization scheme to perform parallel computation over samples and features.
We note that, in contrast to PFBP, SFO does not require any specific data partitioning strategy.
We proceed by describing the ranking heuristic used by SFO.

Let $\mathbf{S}$ be the selected features up to some point and $\mathbf{R} = \mathbf{F} \setminus \mathbf{S}$ be all candidate variables for selection, and assume that a full logistic regression model $M$ for $T$ using $\mathbf{S}$ is available.
SFO creates an approximate model for each variable $R_i \in \mathbf{R}$ by fixing the coefficients of $\mathbf{S}$ using their coefficients in $M$, and only optimizing the coefficient of $R_i$.
This problem is much simpler than fitting full models for each remaining variable, significantly reducing running time.
Then, the best variable $R^*$ is chosen based on those approximate models (using some performance measure such as the log-likelihood), and a full logistic regression model $M^*$ with $\mathbf{S} \cup R^*$ is created.
Thus, at each iteration only a single, full logistic regression model needs to be created.
By default, SFO uses a maximum number of variables to select as a stopping criterion.
Alternatively, to decide whether $R^*$ should be selected a likelihood-ratio test could be used, in which case the test is performed on $M$ and $M^*$, and $R^*$ is selected if the $p$-value is below a threshold $\alpha$; we used this in our implementation of SFO in the experiments.
The parallelization over samples is performed in the map phase, in which one value $p_j$ is computed for each sample $j$, which equals 
$$p_j = \frac{e^{\beta \cdot \mathbf{S}_j}}{1 + e^{\beta \cdot \mathbf{S}_j}}$$\
where $\beta$ are the coefficients of $\mathbf{S}$ in $M$ and $\mathbf{S}_j$ are the values of $\mathbf{S}$ in the $j$-th sample.  
The values of $p_j$, the values of the outcome $T$ and all of candidate variables $\mathbf{R}$ are then sent to workers to be processed in the reduce phase.
Note that, this incurs a high communication cost, as essentially the whole dataset has to be sent over the network.
Finally, in the reduce phase, all workers fit in parallel over all variables $\mathbf{R}$ the approximate logistic regression models.

Although SFO significantly improves over the standard forward selection algorithm in running time, it has three main drawbacks compared to PFBP: (a) it has a high communication cost, in contrast to PFBP which only requires minimal information to be communicated, (b) to select a variable all non-selected variables have to be considered, while PFBP employs the Early Dropping heuristic that significantly reduces the number of remaining variables, and (c) SFO always uses all samples, while PFBP uses Early Stopping and Early Return allowing it to scale sub-linearly with number of samples. Finally, (d) SFO does not provide any theoretical guarantees.

\subsection{Information Theoretic Feature Selection for Big Data}
Information theoretic feature selection (ITFS) methods have been extended to Big Data settings \cite{Gallego17} and implemented for Spark\footnote{https://spark-packages.org/package/sramirez/spark-infotheoretic-feature-selection}.
They are applicable only for discrete variables, although discretization methods can be used to also handle continuous variables.
ITFS relies on computations of the mutual information and the conditional mutual information, and many variations have appeared in the literature \cite{Brown2012}; we provide a brief description next.
The criterion $J$ of many ITFS methods\footnote{There are methods that do not fall into this framework, but we will not go into more detail; see \cite{Brown2012} for more details.} for evaluating feature $X_k$ can be expressed as
$$
J(X_k) = I(T;X_k) - \beta \sum_{X_j \in \mathbf{S}} I(X_j;X_k) + \gamma \sum_{X_j \in \mathbf{S}} I(X_j;X_k|T)
$$
where $\beta$ and $\gamma$ are parameters taking values in $[0, 1]$.
The next best feature is chosen as the one maximizing $J$ with respect to the current set of selected variables $\mathbf{S}$.
All of those criteria approximate the conditional mutual information (CMI) $I(T;X_k|\mathbf{S})$ using different assumptions on the data distributions.
Observe that, both ITFS and forward selection are essentially identical, but use different criteria for selecting the next best feature.
Forward selection using regression models (e.g. logistic regression) assumes a specific probabilistic model to approximate the CMI.
Thus, both approaches use different types of approximations to the CMI.
We note that forward selection can also be used for discrete data to with the CMI criterion using the G-test of conditional independence \cite{Agresti2002}.

The main advantage of ITFS over PFBP is that computing (conditional) mutual informations does not require fitting any model, and thus can be performed efficiently even on a Big Data setting. They can additionally trivially take advantage of sparse data, further speeding up computation.
However, ITFS methods do not have the theoretical properties of PFBP, which can be shown to be optimal for distributions that can be faithfully represented by Bayesian networks and maximal ancestral graphs. 
This stems from the fact that PFBP solves an inherently harder problem, as it considers all selected variables at each iteration in order to select the next feature, while ITFS only conditions on one variable at a time.
Furthermore, ITFS methods are not as general as PFBP, which can be applied to various data types as long as an appropriate conditional independence test is available. For example, it is not clear if and how ITFS can be applied to time-to-event outcome variables, whereas PFBP can be directly applied if a likelihood-ratio test based on Cox regression is used.
Last but not least they are only applicable to discrete data. Thus, in case of continuous variables, a discretization method has to be applied before feature selection, possibly losing information \cite{Kerber1992, Dougherty1995}. This also increases computational time and may require extra tuning to find a good discretization of features.

\subsection{Feature Selection with Lasso}
Feature selection based on Lasso \cite{Tibshirani1996} implicitly performs feature selection while fitting a model.
The feature selection problem is expressed as a global optimization problem using an $L_1$ penalty on the feature coefficients.
We describe it in more detail next, focusing on likelihood-based models such as logistic regression.
Let $D(\theta)$ be the deviance of a model using $n$ parameters $\theta$.
The optimization problem Lasso solves can be expressed as
$$
\min_{\theta \in \mathbb{R}^n} D(\theta) + \lambda \norm{\theta}_{1}
$$
where $\norm{\theta}_{1}$ is the $L_1$ norm and $\lambda \geq 0$ is a regularization parameter.
The solutions Lasso returns are sparse, meaning that most coefficients are set to zero, thus implicitly performing feature selection.
The regularization parameter $\lambda$ controls the number of non-zero coefficients in the solution, with larger values leading to sparser solutions.
This problem formulation is a convex approximation of the more general best subset selection (BSS) problem \cite{Miller2002}, defined as follows to match the Lasso optimization formulation 
$$
\min_{\theta \in \mathbb{R}^n} D(\theta) + \lambda \norm{\theta}_{0}
$$
where $\norm{\theta}_{0}$ equals the total number of variables with non-zero coefficients.
The BSS problem has been shown to be NP-hard \cite{Welch1982}, and thus most approaches, such as Lasso and forward selection, rely on some type of approximation to solve it \footnote{Recently, there have been efforts for exact algorithms solving the BSS problem using mixed-integer optimization formulations for linear regression \cite{Bertsimas2016} and logistic regression \cite{Sato2016}.}.
In contrast to Lasso, which uses a different constraint on the values of the coefficients ($L_1$ instead of $L_0$ penalty), forward selection type algorithms perform a greedy optimization of the BSS problem, by including the next best variable at each step; see \cite{Miller2002,Friedman2001} for a more thorough treatment of the topic.
A sufficient condition for optimality of PFBP and FBS is that distributions can be faithfully represented by Bayesian networks or maximal ancestral graphs (see Section~\ref{sec:theory}.
Conditions for optimal feature selection with Lasso are given in \cite{Meinshausen2006}.

In extensive simulations it has been shown that causal-based feature selection methods are competitive with Lasso on classification and survival analysis tasks on many real datasets \cite{Aliferis2010JMLR,Lagani2010,Lagani2013,MXM16}.
Furthermore, the non-parallel version of PFBP (called Forward-Backward Selection with Early Dropping) as well as the standard Forward-Backward Selection algorithm have been shown to perform as well as Lasso if restricted to select the same number of variables \cite{Borboudakis2017}.

Lasso has been parallelized for single machines and shared-memory clusters \cite{Bradley2011,Asilomar2013,Li2016}.
These approaches only parallelize over features and not samples (i.e. consider vertical partitioning).
Naturally, ideas and techniques presented in those works could be adapted or extended for Spark or related systems.
An implementation of Lasso linear regression is provided in the Spark MLlib library \citep{Meng2016}.
A disadvantage of that implementation is that it requires extensive tuning of its hyper-parameters (like the regularization parameter $\lambda$ and several parameters of the optimization procedure), rendering it impractical as typically many different hyper-parameter combinations have to be used.
Unfortunately, we were not able to find any Spark implementation of Lasso for logistic regression, or any work dealing with the problem of efficient parallelization of Lasso on Spark.
Finally, we note that in contrast to forward selection using conditional independence tests, Lasso is not easily extensible for different problems, and requires specialized algorithms for different data types \citep{Meier2008, VanDeGeer2011, Ivanoff2016}, whose objective function may be non-convex \citep{VanDeGeer2011} or computationally demanding \citep{Fan2010}.

\subsection{Connections to Markov-Blanket Based Feature Selection}
Several algorithms have appeared in the literature that apply tests of conditional independence to select features. 
The theoretical properties of these algorithms often rely on the theory of Bayesian networks and the Markov blanket.
The GS \cite{Margaritis2000,Margaritis2009} and the IAMB \cite{Tsamardinos2003IAMB} algorithms, were some of the first to present the forward-backward selection algorithm in the context of Bayesian networks and the Markov blanket and prove correctness for faithful distributions. 
These algorithms perform tests of independence conditioning each time on {the full set $\mathbf{S}$ of selected features} and can guarantee to identify the Markov blanket for faithful distributions asymptotically. However, the larger the conditioning set, the more samples are required to obtain valid results. 
Thus, these algorithms are not well-suited for problems with large Markov blankets relative to the available sample size. 

Another class of such algorithms includes HITON-PC \cite{Aliferis2003HITON}, MMPC \cite{Tsamardinos2003MMPC}, and more recently SES \cite{MXM16} for multiple solutions. 
The main difference in this class of algorithms is that they condition on {\em subsets of the selected features $\mathbf{S}$}, not the full set. 
They do not guarantee to identify the full Markov blanket, but only a superset of the parents and children of $T$. 
Recent extensive experiments have concluded that they perform well in practice \cite{Aliferis2010JMLR}. 
These algorithms remove from consideration any features that become independent of $T$ conditioned on {\em some} subset of the selected features $\mathbf{S}$. 
This is similar to the Early Dropping heuristic and renders the algorithms quite computationally efficient and scalable to high-dimensional settings. 

PFBP combines the advantages of these two classes of algorithms: those that condition on subsets, drop features from consideration and achieve scalability, and those that condition on the full set of selected features and guarantee identification of the full Markov blanket. 

\section{Experimental Evaluation}
\label{sec:experiments}
We performed three sets of experiments to evaluate PFBP.
First, we investigate the scalability of PFBP in terms of variable size, sample size and number of workers on synthetic datasets, simulated from Bayesian networks.
Then, we compare PFBP to four competing forward-selection based feature selection algorithms. We made every reasonable effort to include all candidate competitors. These alternatives constitute algorithms specifically designed for MapReduce architectures (i.e., SFO), standard FS algorithms using parallel implementations of the conditional independence tests (i.e., UFS and FBS) and ITFS. The latter comparison is indirect using the published results in \cite{Gallego17}. The only Lasso implementation for Spark available in the Spark MLlib library \citep{Meng2016} (a) is for continuous targets, and thus is not suitable for binary classification tasks, and (b) required tuning of 5 hyper-parameters; as no procedure has been suggested by the authors for their tuning, it was excluded from the comparative evaluation. 
%In addition, it only runs for continuous targets, and thus is not suitable for binary classification tasks. 
Finally, we performed a proof-of-concept experiment of PFBP on dense synthetic SNP data.

\subsection{Experimental Setup}
For all experiments we used a cluster with 5 machines, 1 acting as a master and 4 as workers, connected to a 1 Gigabit network.
Each machine has 2 Intel Xeon E5-2630 v3 CPUs with 8 physical cores each, and 256 GB of RAM.
Thus, a total of 64 cores and 1 TB of memory were available on all 4 workers.
The cluster is running Spark 2.1.0 and using the HDFS file system.
All algorithms were implemented in Java 1.8 and Scala 2.11.

The significance level $\alpha$ was set to 0.01 for all algorithms, and PFBP was executed with 2 Runs.
For the bootstrap tests used by PFBP, we used the default parameter values as described in Section~\ref{sec:algheuristics}.
To produce a predictive model for PFBP we followed the approach described in Section~\ref{sec:lrmodel}.
Parameter values related to the number of Group Samples, Sample Sets and Feature Sets were determined using the STD rule, and by setting the maximum number of variables to select to $\mathit{maxVars}$ (the exact value is given for each specific experiment later); see Section~\ref{sec:parameters} for more details.
We note that, none of the experiments required a physical partitioning to Feature Sets, and thus Feature Sets were only partitioned virtually.

\subsection{Scalability of PFBP with Sample Size, Feature Size and Number of Workers}

\begin{figure*}[t!]
\centering
\includegraphics[width=0.485\textwidth]{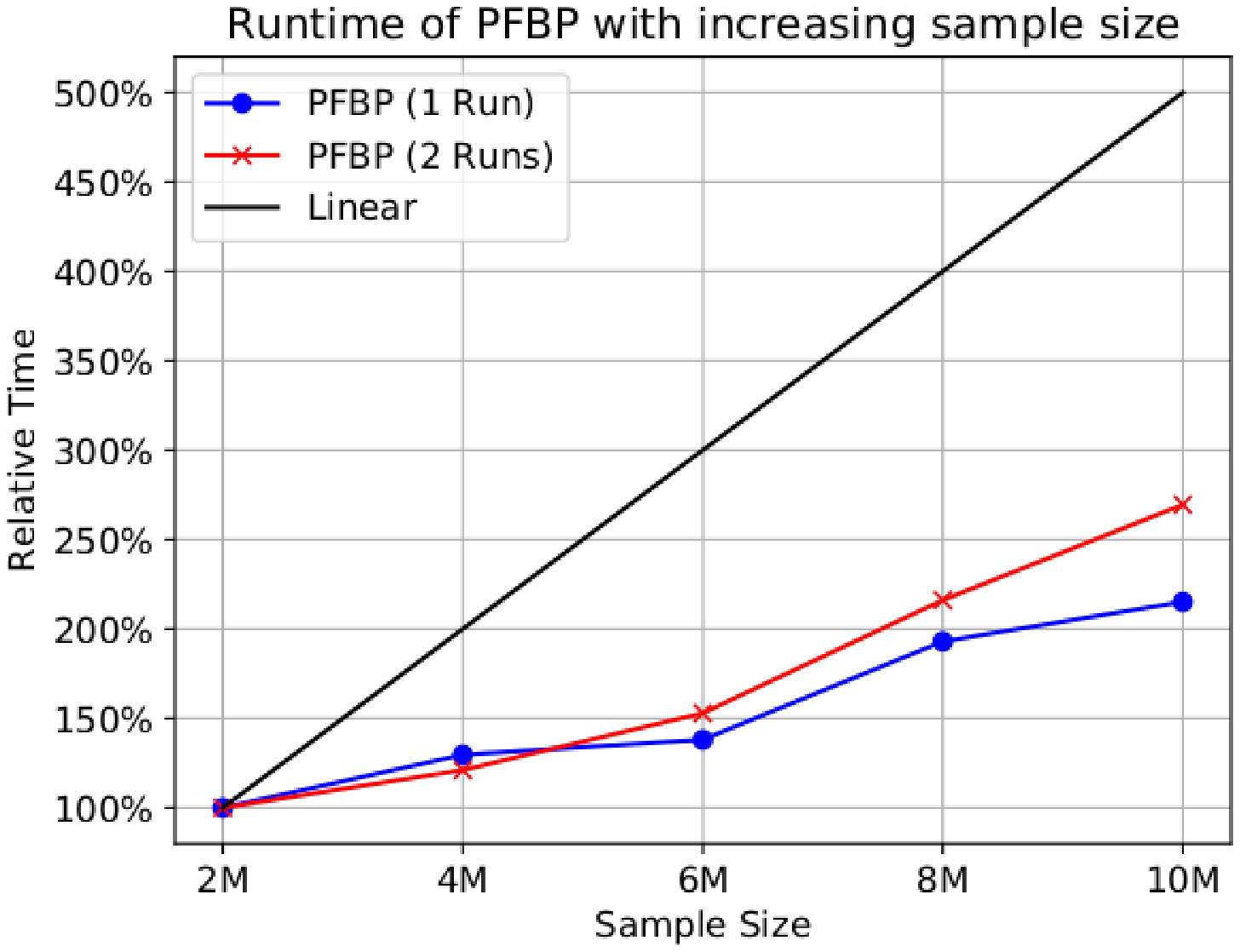}
\includegraphics[width=0.485\textwidth]{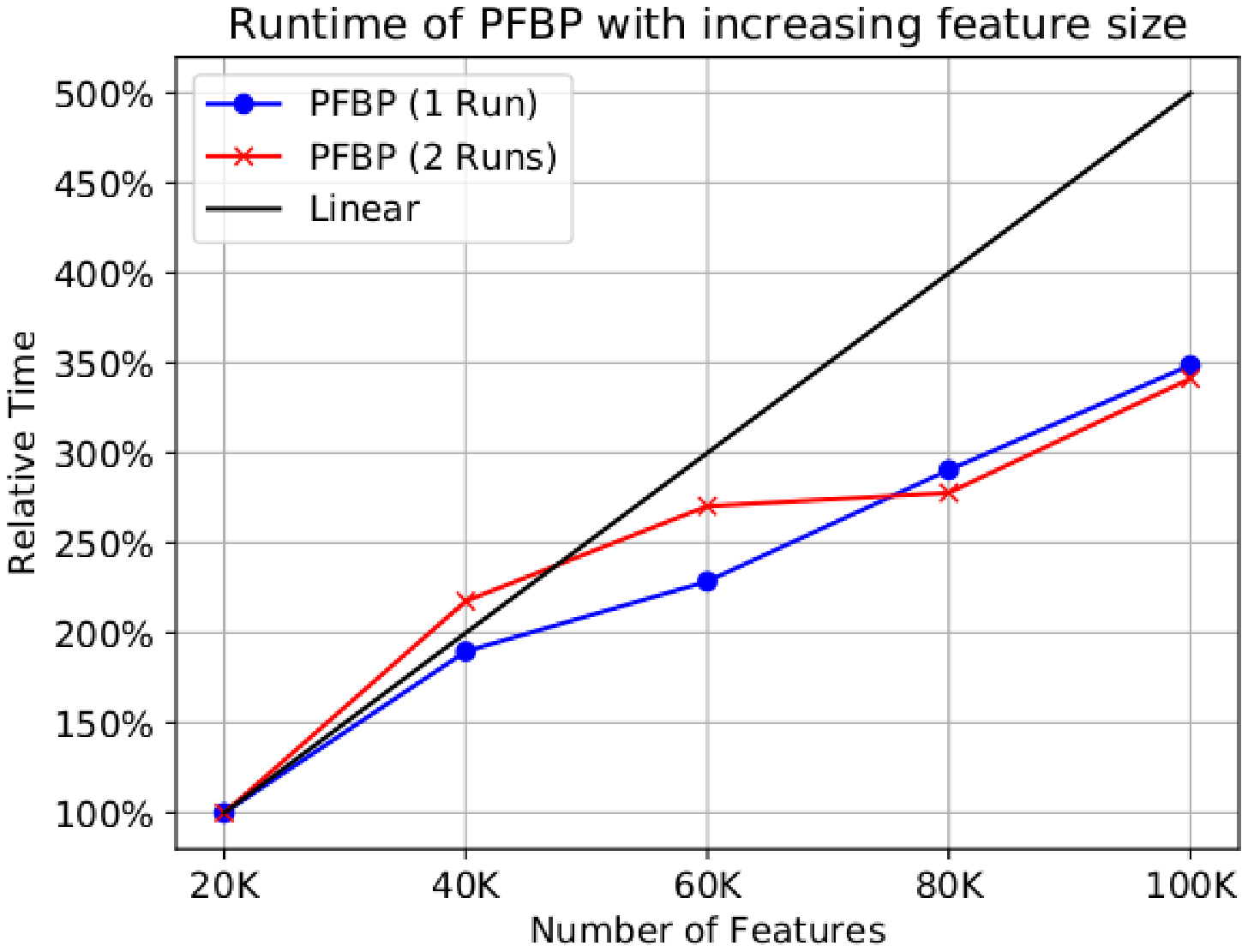}
\includegraphics[width=0.485\textwidth]{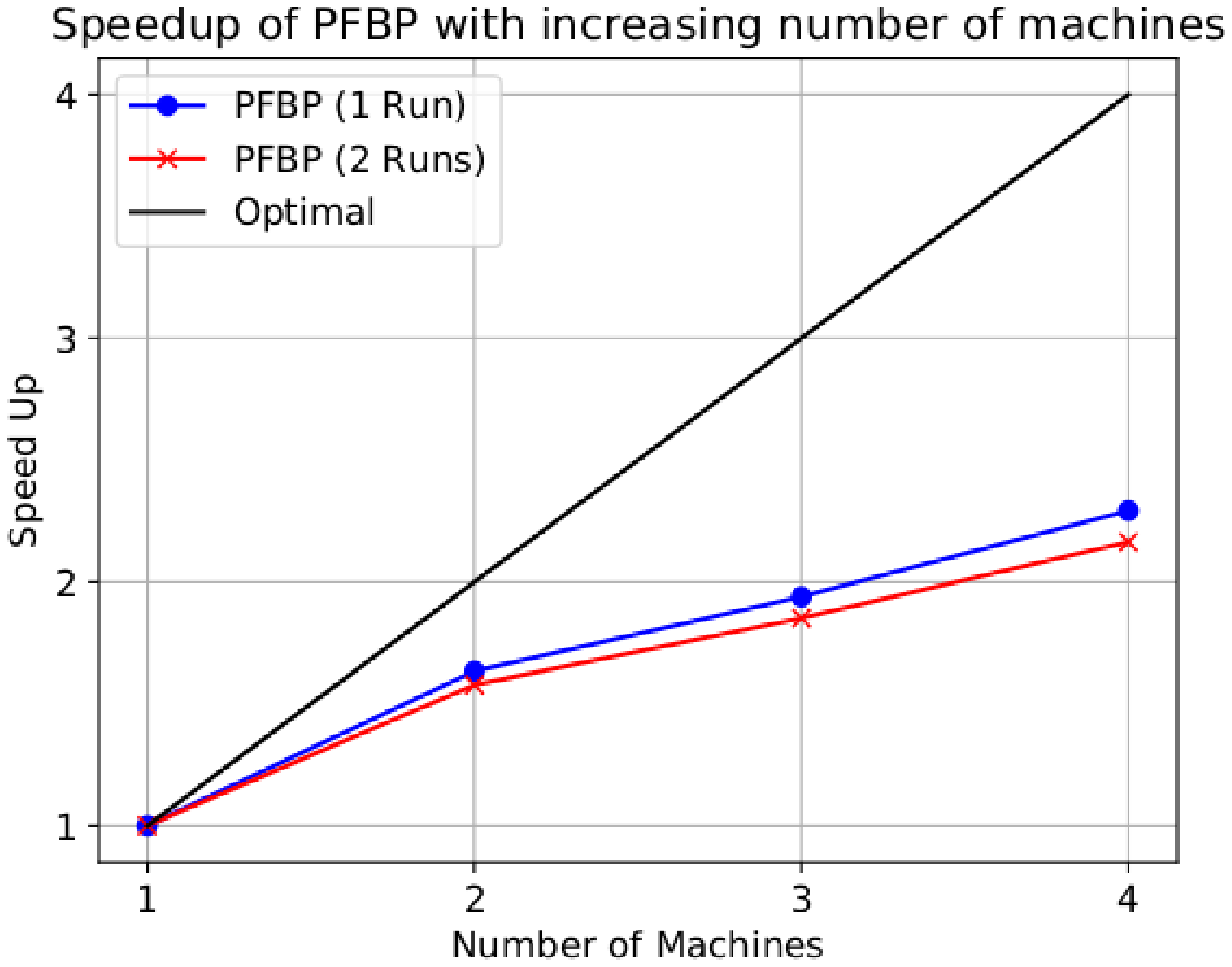}
\caption{Scalability of PFBP with increasing sample size (top left), feature size (top right) and number of machines (bottom). Time and speed-up were computed relatively to the first point on the x-axis, for the same number of Runs. PFBP improves super-linearly with sample size, linearly with feature size and running time is reduced linearly with increasing number of machines. The results are similar for PFBP with 1 run and 2 runs.}
\label{fig:scaling}
\end{figure*}

We investigated the scalability of PFBP on dense synthetic datasets in terms of sample size, variable size and number of workers used. 
The data were sampled from randomly generated Bayesian networks, which are probabilistic models that can encode complicated dependency structures among features.
Such \textit{simulated data contain not only features necessary for optimal prediction (strongly relevant in the terminology of \cite{John94}) and irrelevant, but also redundant features (weakly relevant \cite{John94})}. 
This is a novelty in the Big Data FS literature as far as we know, making the synthetic tasks more realistic.
A detailed description of the data and network generating procedures is given in Appendix~\ref{app:bn}.

For each experimental setting, we generated 5 Bayesian networks, and sampled one dataset from each.
The connectivity parameter $C$ was set to 10 (i.e., the average degree of each node), the class distribution of $T$ was 50/50, and the variance of the error term was set to 1.
To investigate scalability in terms of sample size, we fixed the number of features to 1000 and varied the sample size from 2M to 10M.
Scalability in terms of feature size was evaluated on datasets with 100K samples and varying the feature size from 20K to 100K, all of which also included the optimal feature set (i.e. the Markov blanket of $T$).
Finally, scalability in terms of number of workers was investigated on datasets with 10K variables and 1M samples.
The maximum number of variables $\mathit{maxVars}$ to select was set to 50.

The results are summarized in Figure~\ref{fig:scaling}.
On the top row we show the relative runtime of PFBP with varying sample size (left) and number of variables (right), respectively.
The bottom figure shows the speed-up achieved with varying the number of workers. 
Relative time and speed up are computed with respect to the lowest point on the x-axis. 
We can clearly see that: (Top Left) PFBP improves super-linearly with sample size; in other words, feeding twice the number of rows to the algorithm requires less than double the time. 
This characteristic can be attributed to the Early Stopping and Early Return heuristics.
(Top Right) PFBP scales linearly with number of features due to the Early Dropping heuristic and (Bottom) PFBP is able to utilize the allocated machines, although the speed-up factor does not reach the theoretical optimum. 
The reason for this is that the Early Stopping heuristic quickly prunes many features from consideration after processing the first Group sample, reducing parallelization in subsequent Groups as only few features remain Alive. 

\subsection{Comparative Evaluation of PFBP on Real Datasets}

\begin{table}[!t]
\centering
	\caption{Binary classification datasets used in the comparative evaluation}
    \label{tbl:datasets}
  \begin{tabular}{lrrr}
    \toprule
    Name &
    \#Samples &
    \#Variables &
    Non-zeros per Row
\\
     \midrule
     SUSY           & 5000000  & 18          & 17.78  \\
	 HIGGS          & 11000000 & 28          & 25.78  \\
     covtype.binary & 581012    & 54          & 12.94  \\
     epsilon        & 500000    & 2000       & 2000.00   \\
     rcv1.binary    & 697641    & 47236      & 73.15  \\
     avazu          & 40428967 & 1000000   & 14.99  \\
     news20.binary  & 19996     & 1355191   & 454.98 \\
     url            & 2396130  & 3231961   & 115.62 \\
     webspam        & 350000    & 16609143  & 3727.70 \\
     kdd2010a       & 8407752  & 20216830  & 36.34  \\
     kdd2010b       & 19264097 & 29890095  & 29.39  \\
    \bottomrule
  \end{tabular}
\end{table}

We evaluated the PFBP algorithm on 11 binary classification datasets, collected from the LIBSVM dataset repository\footnote{http://www.csie.ntu.edu.tw/~cjlin/libsvmtools/datasets/}, with the constraint that each dataset contains at least 500K samples or variables.
A summary of the datasets, shown in order of increasing variable size, can be found in Table~\ref{tbl:datasets}.
The first two columns show the total number of samples and variables of each dataset, while the last column shows the average number of non-zero elements of each sample.
The maximum number of non-zero elements equals the total number of variables.
Except for the first four datasets, all other datasets are extremely sparse.

\subsubsection{Algorithms and Setup}
We compared PFBP to 3 forward selection based algorithms: (i) Single Feature Optimization (SFO) \cite{Singh2009}, (ii) Forward-Backward Selection (FBS), and (iii) Univariate Feature Selection (UFS). 
UFS and FBS were implemented using a parallelized implementation of standard binary logistic regression for Big Data provided in the Spark machine learning library \cite{Meng2016}; this was also the implementation used to fit full logistic regression models for SFO at each iteration.

The algorithms were compared in terms of classification accuracy and running time. 
To estimate the classification accuracy, 5\% of the training instances were randomly selected and kept out.
The remaining 95\% were used by each algorithm to select a set of features and to train a logistic regression model using those features.
For SFO, FBS and UFS, the default recommended parameters for data partition and fitting the logistic regression models of Spark were used.
The maximum number of features to select was set to 50.
A timeout limit of 2 hours was used for each algorithm.
In case an algorithm did not terminate within the time limit, the number of features selected up to that point are reported.
If no feature was selected, the accuracy was set to N/A (not available).

\subsubsection{Results of the Comparative Evaluation}

\newcommand{\tmo}{$^1$120.0} % Timeout
\newcommand{\tmof}{$^2$120.0} % Timeout
\newcommand{\oom}{$^3$N/A} % Out of memory
\newcommand{\crash}{$^4$N/A} % Out of memory
\newcommand{\ito}{$^1$120.0} % Inferred timeout
\newcommand{\restime}[4]{#1 & #2 & #3 & #4} 
\newcommand{\resacc}[5]{#1 & #2 & #3 & #4 & #5} 
\newcommand{\resvars}[4]{#1 & #2 & #3 & #4} 

\setlength{\tabcolsep}{.25em}
\begin{table*}[!t]
\centering
	\caption{
    The table shows the total running time in minutes, the classification accuracy \% and the number of variables selected.
    PFBP significantly outperforms all competitors in terms of running time, and is the only algorithm that is able to produce results on all datasets within the given time limit of 2 hours.
    SFO produces results only for datasets up to 2000 variables (epsilon), whereas UFS and FBS produce results only for datasets with a few tens of variables.
    Furthermore, for SFO, UFS and FBS Spark either crashed or ran out of memory for the largest datasets.
    In terms of classification accuracy, all algorithms perform similarly, and perform better than trivial classification to the most frequent class.
    }
    \label{tbl:resultsfull}
      \fontsize{7pt}{7pt}\selectfont
  \begin{tabular}{lrrrrccccccccc}
    \toprule
    &
    \multicolumn{4}{c}{Running Time (Minutes)} &
    \multicolumn{5}{c}{Classification Accuracy (\%)} &
    \multicolumn{4}{c}{Selected Variables}
\\
    \cmidrule(lr){2-5} \cmidrule(lr){6-10} \cmidrule(lr){11-14}
	Dataset &
    \multicolumn{1}{r}{PFBP} & \multicolumn{1}{r}{SFO} & \multicolumn{1}{r}{UFS} & \multicolumn{1}{r}{FBS} &
    \multicolumn{1}{c}{PFBP} & \multicolumn{1}{c}{SFO} &\multicolumn{1}{c}{UFS} & \multicolumn{1}{c}{FBS}  & \multicolumn{1}{c}{Trivial} &
    \multicolumn{1}{r}{PFBP} & \multicolumn{1}{r}{SFO} & \multicolumn{1}{r}{UFS} & \multicolumn{1}{r}{FBS}\\
    \midrule
    SUSY & \restime{0.1}{4.5}{1.0}{26.9} & \resacc{78.87}{78.55}{78.59}{78.59}{54.24} & \resvars{10}{14}{18}{15} \\
    HIGGS & \restime{0.2}{9.9}{3.0}{89.7} & \resacc{64.04}{64.07}{64.06}{64.06}{52.99} & \resvars{11}{18}{28}{18} \\
    covtype& \restime{3.0}{55.1}{1.6}{\tmo} & \resacc{75.79}{75.72}{75.70}{75.80}{51.24} & \resvars{27}{45}{50}{22} \\
    epsilon & \restime{1.3}{46.4}{\tmof}{\tmof} & \resacc{86.04}{86.39}{N/A}{N/A}{50.00} & \resvars{50}{50}{\color{red} 0}{\color{red} 0} \\
    rcv1 & \restime{10.8}{\tmof}{\tmof}{\tmof} & \resacc{91.32}{N/A}{N/A}{N/A}{52.47} & \resvars{50}{\color{red} 0}{\color{red} 0}{\color{red} 0} \\
    avazu & \restime{\tmo}{\oom}{\tmof}{\tmof} & \resacc{88.17}{N/A}{N/A}{N/A}{88.11} & \resvars{33}{\color{red} 0}{\color{red} 0}{\color{red} 0} \\
    news20 & \restime{55.2}{\tmof}{\tmof}{\tmof} & \resacc{85.84}{N/A}{N/A}{N/A}{50.00} & \resvars{50}{\color{red} 0}{\color{red} 0}{\color{red} 0} \\
    url & \restime{91.9}{\tmof}{\tmof}{\tmof} & \resacc{96.89}{N/A}{N/A}{N/A}{66.94} & \resvars{50}{\color{red} 0}{\color{red} 0}{\color{red} 0} \\
    webspam & \restime{\tmo}{\oom}{\oom}{\oom} & \resacc{97.60}{N/A}{N/A}{N/A}{60.62} & \resvars{33}{\color{red} 0}{\color{red} 0}{\color{red} 0} \\
    kdd2010a & \restime{\tmo}{\crash}{\crash}{\crash} & \resacc{86.13}{N/A}{N/A}{N/A}{85.30} & \resvars{27}{\color{red} 0}{\color{red} 0}{\color{red} 0} \\
    kdd2010b & \restime{\tmo}{\crash}{\crash}{\crash} & \resacc{86.13}{N/A}{N/A}{N/A}{86.06} & \resvars{25}{\color{red} 0}{\color{red} 0}{\color{red} 0} \\
    \bottomrule
    \multicolumn{14}{l}{$^1$ Timeout (returned model) $|$ $^2$  Timeout (no model returned) $|$ $^3$ Out of memory $|$ $^4$ Spark crashed}
  \end{tabular}
\end{table*}

Table~\ref{tbl:resultsfull} shows the results of the evaluation. 
It shows the running time in minutes, the classification accuracy and the number selected variables of each algorithm and on each dataset. 
We included the results of the trivial classification method, which classifies each sample to the most frequent class, and thus attains an accuracy equal to the frequency of the most common class.
It can clearly be seen that PFBP outperforms all competing methods in terms of running time.
All competing methods reach the timeout limit or crash and do not select a single feature even for the moderately sized rcv1 dataset, which contains only 47.2K variables and 697.6K samples.
PFBP reaches the timeout limit for 4 datasets, but is still able to select features (ranging from 25 to 33) and to produce a predictive model.
For the few datasets in which the competing methods terminated, all methods perform similarly in terms of accuracy.
For the avazu, kdd2010a and kdd2010b datasets, PFBP reaches an accuracy only marginally better than the trivial classifier, which may be because of the hardness of the problem, or because logistic regression is not appropriate.

\subsubsection{Indirect Comparison with Information Theoretic Feature Selection Methods}
We compare PFBP to the Minimum-Redundancy Maximum-Relevance (mRMR) algorithm \cite{Peng2005}, an instance of the ITFS methods \cite{Brown2012}.
The mRMR algorithm has been extended for Spark  and has been evaluated on binary classification datasets \cite{Gallego17}, including the epsilon, url and kdd2010b datasets.
This allows us to perform an indirect comparison of mRMR and PFBP.
They used a cluster with 432 cores in contrast to 64 used by us (6.75 times more cores), and used 80\% of the samples as training data in contrast to the 95\% we used.
mRMR was able to select 50 features in 4.9 and 5.6 minutes for epsilon and url respectively, and 12.9 minutes to select 25 features for kdd2010b. 
For epsilon, which is dense, PFBP took 1.3 minutes to select 50 features, while for url and kdd2010b which are sparse, PFBP needed 91.9 minutes and 120 minutes to select 50 and 25 features respectively.
Thus, for dense data PFBP may be faster, while possibly being a bit slower for sparse data, as mRMR had 6.75 times more cores available.
Recall that no specialized implementation for sparse data was used for PFBP, which could potentially further reduce its running time.
In any case, there does not seem to be a significant gap in running time between both approaches.
Unfortunately, no comparison in terms of classification accuracy is possible, as a different different performance measure and a different classifier was used for mRMR.
Finally, we point out that no predictive model was presented for the kdd2010b dataset, as the implementations of the naive Bayes and SVM classifiers used could not handle it, while PFBP is always able to produce a predictive model without any computational overhead, as long as it is able to complete a single iteration.

\subsection{Proof-of-Concept Application on SNP Data}
Single Nucleotide Polymorphisms (SNPs)\footnote{https://ghr.nlm.nih.gov/primer/genomicresearch/snp}, the most common type of genetic variation, are variations of a single nucleotide at specific loci in the genome of a species. 
The Single Nucleotide Polymorphism Database (dbSNP) (build 150) \cite{DBSNP} now lists 324 million different variants found in sequenced human genomes\footnote{https://www.ncbi.nlm.nih.gov/news/04-11-2017-human-snp-build-150/}. 
In several human studies, SNPs have been associated with genetic diseases or predisposition to disease or other phenotypic traits. 
As of 2017-08-13, the GWAS Catalog\footnote{https://www.ebi.ac.uk/gwas/} contains 3057 publications and 39366 unique SNP-trait associations. 
Large scale studies under way (e.g., Precision Medicine Initiative \cite{Collins2015}) intend to collect SNP data in large population cohorts, as well as other medical, clinical and lifestyle data. 
The resulting matrices may end up with millions of rows, one for each individual, and variables (SNP or some other measured quantity).
A proof-of-concept application of PFBP is presented next.

\subsubsection{SNP Data Generation and Setup}
We simulated genotype data containing 500000 individuals (samples) and 592555 SNP genotypes (variables), following the procedure described in \cite{Canela2015}.
As SNP data are dense, they require approximately 2.16 TB of memory, and thus are more challenging to analyze than sparse data, such as the ones used in the previous experiment.
The data were simulated with the HAPGEN 2 software \cite{HapGen} from the Hapmap 2 (release 22) CEU population \cite{HapMap}.
A more detailed description of the data generation procedure is given in Appendix~\ref{app:snp}.

We used $M = 100$ randomly selected SNPs to generate a binary phenotype (outcome), as described in \cite{Canela2015} (see also Appendix~\ref{app:snp}).
The optimal accuracy using all 100 SNPs is 81.42\%.
Ideally and given enough samples, any feature selection method should select those 100 SNPs and achieve an accuracy around 81.42\%.
Due to linkage disequilibrium however, many neighboring SNPs are highly correlated (collinear) and as a consequence offer similar predictive information about the outcome and are informationally equivalent.
Therefore, a high accuracy can be achieved even with SNPs other than the 100 used to simulated the outcome.

We used 95\% of the samples as a training set, and 5\% as a test set for performance estimation.
We set a timeout limit of 15 hours, and used the same setup as used in previous experiments, with the exception that the maximum number of variables to select was set to 100.

\subsubsection{Repartitioning to Reduce Memory Requirements}
For big, dense data such as the SNP data considered in this experiment which require over 2 TB of memory, a direct application of PFBP as used in other experiments is possible, but may be unnecessarily slow.
We found that for such problems, it makes sense to repartition the data at some point, if enough variables have been removed by the Early Dropping heuristic.
Repartitioning and discarding dropped variables reduces storage requirements, and may offer a significant speed boost.
It is an expensive operation however, and should only be used in special situations.
For the SNP data, after the first iteration only about a third of the variables remained, reducing the memory requirements to less than 1 TB, and thus most (if not all) of the data blocks were able to fit in memory.
In this case repartitioning makes sense, as its benefits far outweigh the computational overhead.

\subsubsection{Results on the SNP Data}
PFBP was able to select 84 features in 15 hours, using a total of 960 core hours.
It achieved an accuracy of 77.62\%, which is over 95\% of the theoretical optimal accuracy.
The results are very encouraging; in comparison the DISSECT software \cite{Canela2015} took 4 hours on 8400 cores (that is, 33600 core hours) and using 16 TB of memory to fit a mixed linear model on similar data, and to achieve an accuracy around 86\% of the theoretical maximum. The two experiments are not directly comparable because (a) the outcome in our case is binary instead of continuous requiring logistic regression models favoring DISSECT, (b) the scenarios simulated in \cite{Canela2015} had larger Markov blankets (1000 and 10000 instead of 100) favoring PFBP (although, their results are invariant to the size of the Markov blanket). Nevertheless, the reported results are still indicative of the efficiency of PFBP on SNP Big Data.

\begin{figure}[t!]
\centering
\includegraphics[width=0.485\columnwidth]{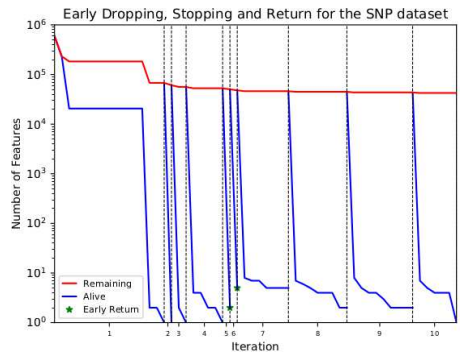}
\caption{The effects of the early pruning heuristics is shown for the first 10 forward iterations on the SNP data. The y-axis shows the number of variables on a logarithmic scale. The width of each iteration is proportional to the number of groups processed. The early dropping heuristic is able to quickly discard many features, reducing them by about an order of magnitude. Early stopping filters out most variables after processing the first group, and early return is applied two times.}
\label{fig:snp}
\end{figure}

Figure~\ref{fig:snp} shows the effects of the heuristics used by PFBP for the first 10 iterations.
The y-axis shows the number of Remaining and Alive features on a logarithmic scale.
The x-axis shows the current iteration, and the width is proportional to the total number of Groups processed in that iteration.
We observe that (a) Early Dropping discards many features in the first iteration, reducing the number of Remaining features by about an order of magnitude, (b) in most iterations, Early Stopping is able to reduce the number of Alive features to around 10 after processing the first Group, (c) Early Return is applied 2 times, ending the Iteration and selecting the top feature after processing a single Group.

\begin{figure}[t!]
\centering
\includegraphics[width=0.485\columnwidth]{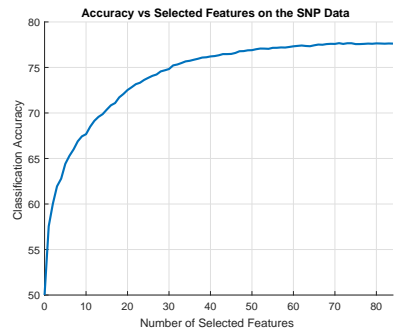}
\caption{The figure shows how the accuracy of PFBP on the SNP data increases as it selects more features. The models are produced by PFBP at each iteration with minimal computational overhead.
In the first few iteration, accuracy increases sharply, while in the later iterations a plateau is reached, reaching a value of 77.59\% with 70 features, with the maximum being 77.62\% with 84 features.
This could be used as a criterion to stop feature selection early.
}
\label{fig:snp:acc}
\end{figure}

Finally, we also computed the accuracy at each iteration of PFBP, to investigate its behavior with increasing number of selected features.
As before, the accuracy is computed on the 5\% of the data that were kept out as a test set.
Such information could be used to decide early whether a sufficient number of features has been selected, and to stop computation if the accuracy reaches a plateau.
This is often the case, as most important features are typically selected during the first few iterations.
This task can be performed using PFBP with minimal computational overhead, as the local models required to approximate a full global model (see Section~\ref{sec:lrmodel}) are already available.
The results are shown in Figure~\ref{fig:snp:acc}.
As expected, the largest increase in accuracy is obtained after selected the first few features, reaching an accuracy of 75\% even after selecting only 30 features.
In addition, after selecting about 70 features, the accuracy increases only marginally afterwards, increasing from 77.59\% with 70 features to 77.62\% with 84.
Thus, computation could be stopped after 70 features have been selected, and still attain almost the same accuracy.

\section{Discussion and Conclusions}
We have presented a novel algorithm for \emph{feature selection} (FS) in Big Data settings that can scale to millions of predictive quantities (i.e., features, variables) and millions of training instances (samples). The Parallel, Forward-Backward with Pruning (PFBP) enables \emph{computations that can be performed in a massively parallel way} by partitioning data both \emph{horizontally} (over samples) and \emph{vertically} (over features) and using meta-analysis techniques to combine results of local computations. It is equipped with \emph{heuristics that can quickly and safely drop from consideration some of the redundant and irrelevant features} to significantly speed up computations. The heuristics are inspired by causal models and provide theoretical guarantees of correctness in distributions faithful to causal models (Bayesian networks or maximal ancestral graphs). Bootstrapping testing allows PFBP to determine whether enough samples have been seen to safely apply the heuristics and forgo computations on the remaining samples. Our empirical analysis confirms that, PFBP exhibits a super-linear speedup with increasing sample size and a linear scalability with respect to the number of features and processing cores. 
A limitation to address in the future is to equip the algorithm with a principled criterion for the determining the number of selected features. Other directions to improve include exploiting the sparsity of the data, and implementing run-time re-partitioning when deemed beneficial, implementing tests in GPUs to name a few. 

\begin{acknowledgements}
IT and GB have received funding from the European Research Council under the European Union's Seventh Framework Programme (FP/2007-2013) / ERC Grant
Agreement n. 617393. We'd like to thank Kleanthi Lakiotaki for help with the motivating example text and data simulation, and Vincenzo Lagani for proofreading and constructive feedback.
\end{acknowledgements}

\clearpage
\appendix
\section*{Appendices}

\section{Accuracy of $p$-value Combination using Meta-Analysis and Evaluation of the STD Rule}
\label{app:minsample}

We evaluated the ability of the proposed $p$-value computation method (combination of local $p$-values using Fisher's combined probability test) in identifying the same variable to select as when global $p$-values are used.
We performed a computational experiment on simulated data to investigate the effect of the total sample size and number of data Blocks on the accuracy of the proposed approach.
Furthermore, we compare the STD and EPV rules for setting the minimum number of samples in each Data Block.
The EPV rule computes the sample size per Sample Group as $s = df \cdot c / \min(p_0,p_1)$, while STD uses $s = df \cdot c / \sqrt{p_0 \cdot p_1}$, where $df$ is set to the maximum number of degrees of freedom (see \ref{sec:minsample} for more details), $c$ is a positive constant (which may take different values for each rule), and $p_0$ and $p_1$ are the frequencies of the negative and positive class respectively.

\subsection{Data Generation}
To generate data with complex correlation structures, we chose to generate data from simulated Bayesian networks.
All variables are continuous Gaussian and are linear functions of their parents.
The target variable is binary, and the log-odds ratio is a linear function of its parents.
The procedure is described in detail in Appendix~\ref{app:bn}.

We used the following parameters to generate Bayesian networks and data from those networks: (a) the number of variables was set to 101 (100 variables plus the outcome $T$), (b) the connectivity parameter was set to 10 (i.e., the average degree of each node), (c) the frequency of the most frequent class of $T$ was set to $\{50\%, 60\%, 70\%, 80\%, 90\%\}$ and (d) the standard deviation of the error terms was set to $\{0.01, 0.1, 1\}$.
In total this results in 15 possible Bayesian network configurations.
Note that, the connectivity is relatively high and the standard deviation of the error terms is relatively low so that all variables are highly correlated, increasing the difficulty of the problem of selecting the best variable.
For each such parameter combination we generated 250 Bayesian networks, resulting in a total of 250 $\times$ 15 = 3750 networks.
Next, we generated datasets with different sample sizes, by varying the sample size from $10^{2.5}$ to $10^4$ in increments of $0.1$ of the exponent, leading to 16 different sample sizes.
Overall this resulted in 60000 datasets.

\subsection{Simulation Results: Combined $p$-values vs Global $p$-values}
\begin{figure*}[t!]
\centering
\includegraphics[width=0.475\textwidth]{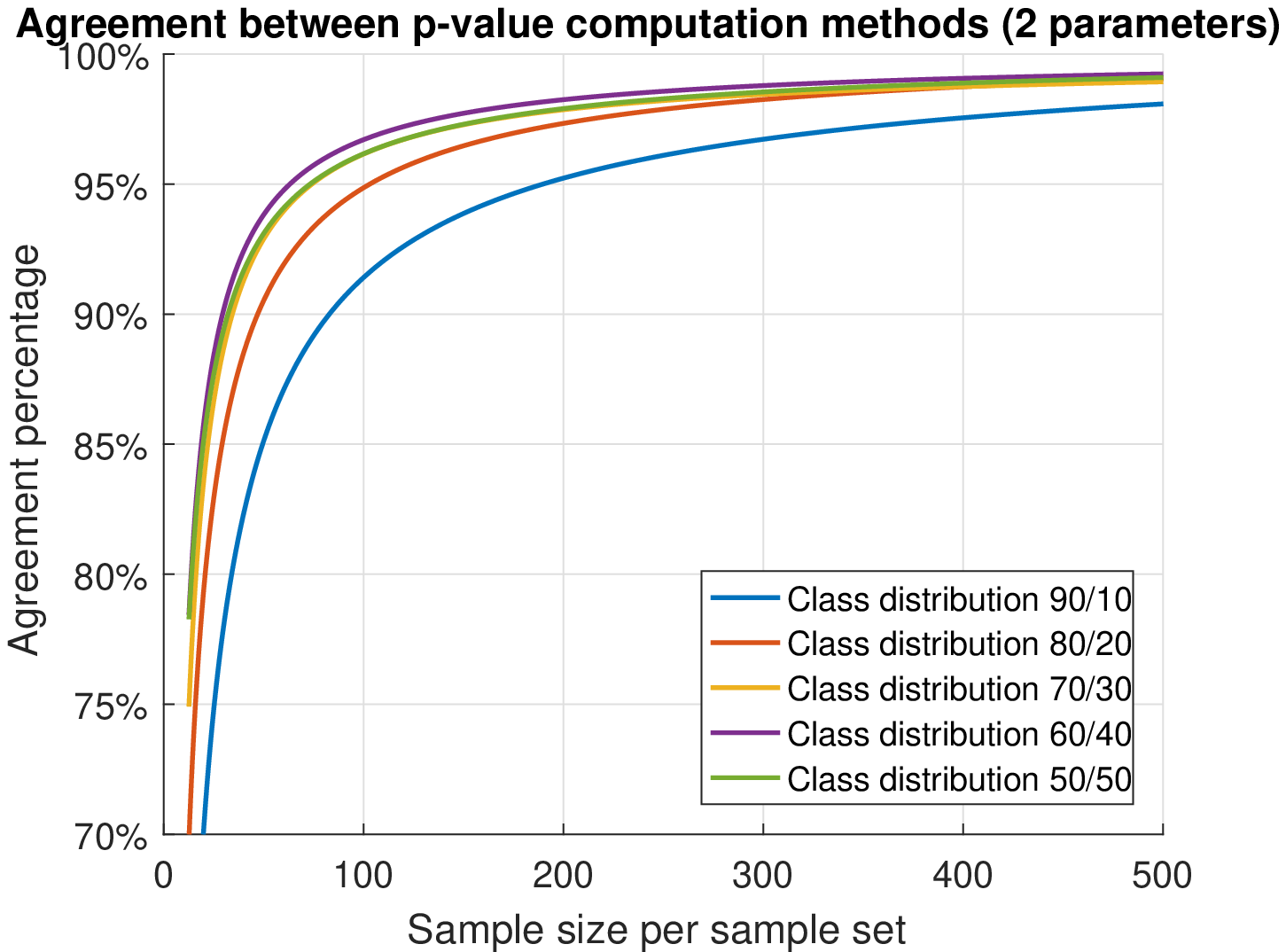}
\includegraphics[width=0.475\textwidth]{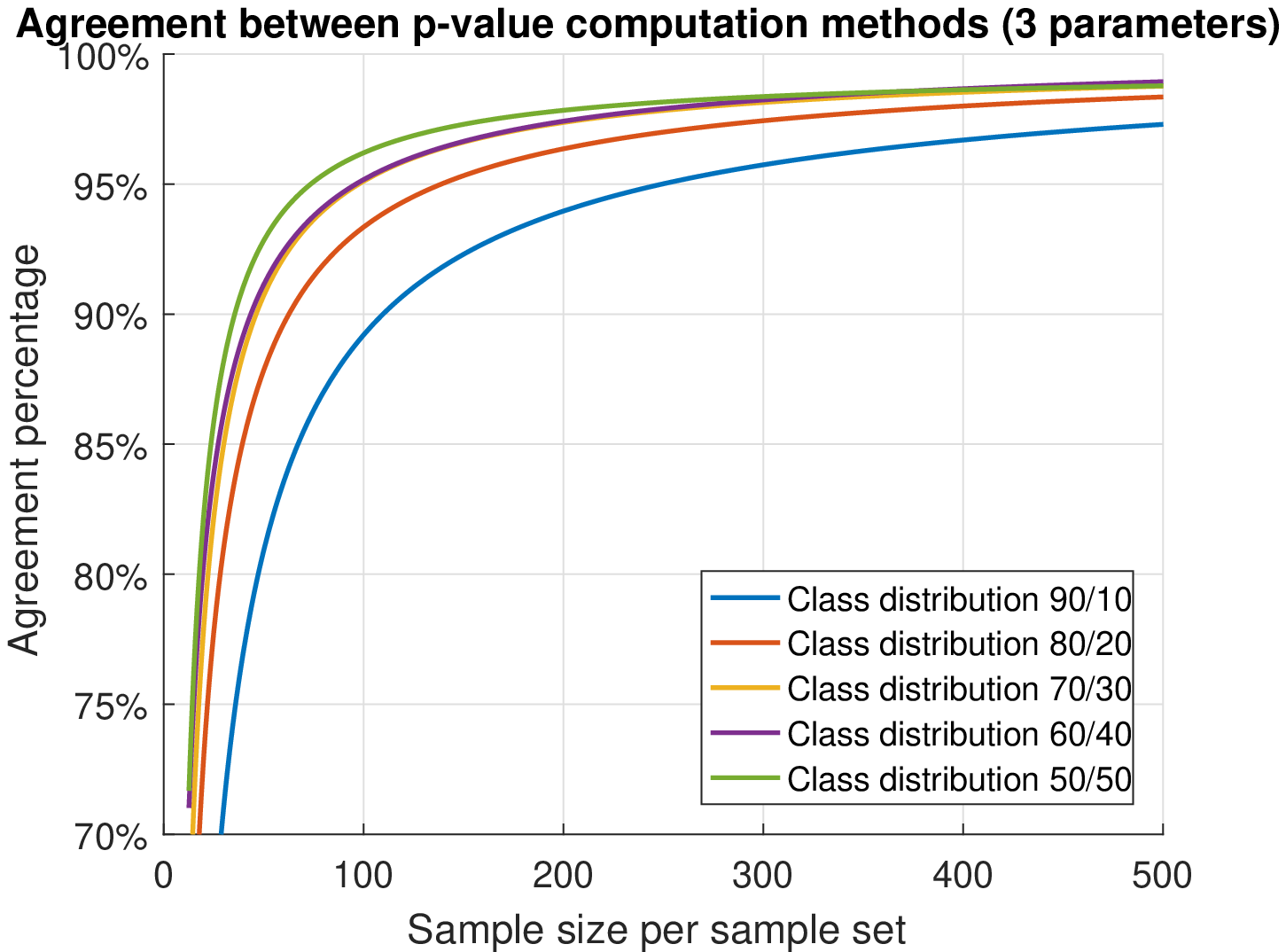}
\includegraphics[width=0.475\textwidth]{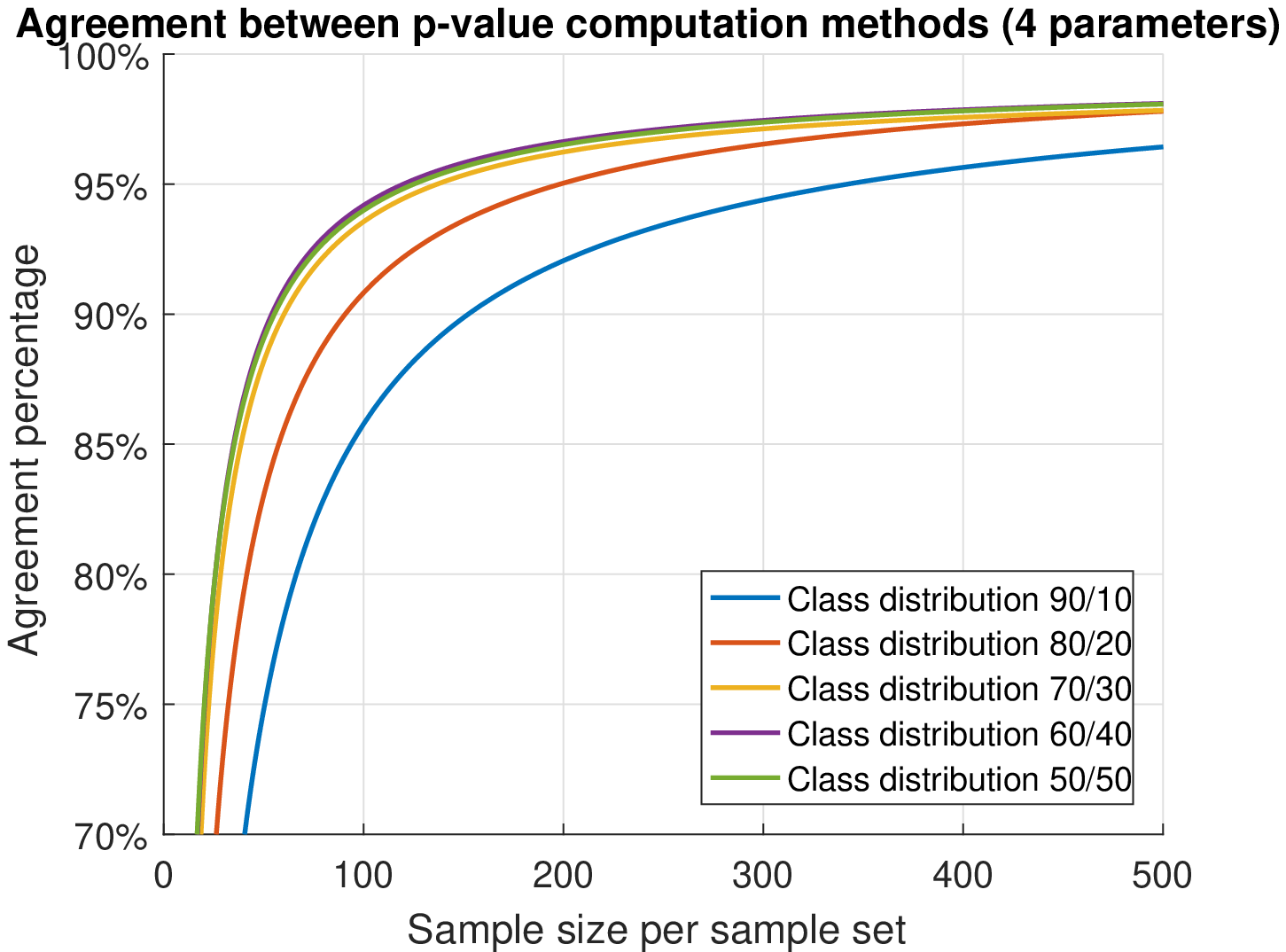}
\includegraphics[width=0.475\textwidth]{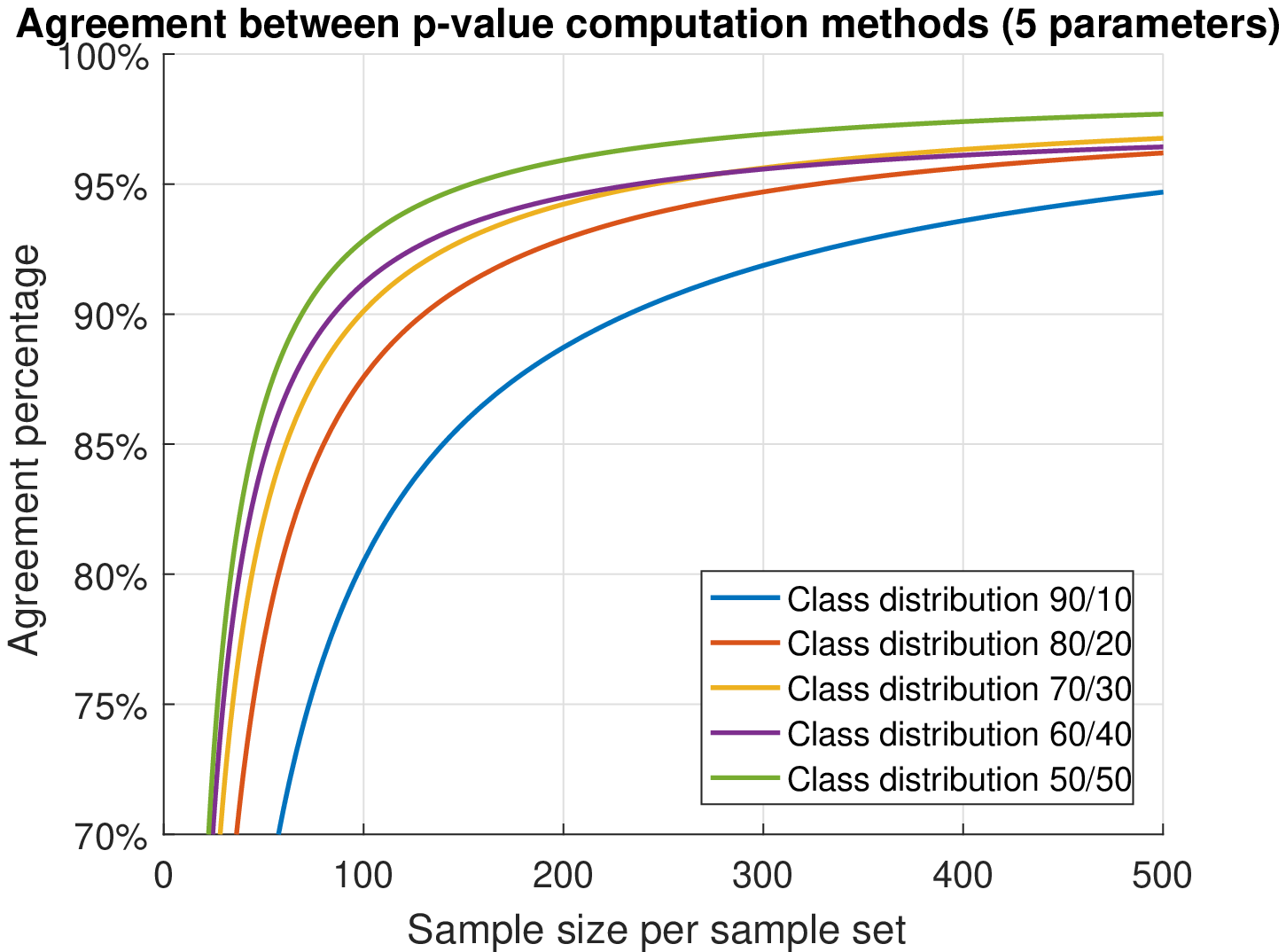}
\caption{The percentage of agreement is shown, which corresponds to how often combining local $p$-values and computing the $p$-value on all samples leads to the same decision. The $y$-axis shows how the sample size per sample set affects the agreement percentage. Both methods tend to agree asymptotically for various class distributions and conditioning set sizes.}
\label{fig:agreement}
\end{figure*}

We performed conditional independence tests on all generated datasets to simulate a forward Iteration using $p$-values from global tests (i.e., using all data) and from combined $p$-values using Fisher's method.
We varied the following parameters: (a) the number of conditioning variables, which was set to 0, 1, 2 or 3, and (b) the number of Sample Sets each dataset was split to, which ranged from 1 (no split) to 25 with increments of 1 (a total of 25 cases).
This allows us to investigate the effect of the total number of combined local $p$-values.
The simulation of a forward Iteration was performed for each dataset and conditioning size $k$ as follows: (a) $k$ variables were randomly selected from the Markov blanket of $T$ (simulating that $k$ variables have already been selected), (b) the global conditional independence test was performed between $T$ and the remaining variables over all samples (i.e. number of sample sets equals 1), (c) the same test was performed on all Sample Sets resulting by splitting the data randomly to $m$ equally-sized sample sets ($m$ ranging from 2 to 25) and combining the $p$-values using Fisher's combined probability test.

We compute the percentage of agreement between both methods, that is, how often both methods select the same variable.
This is computed as the proportion of times both methods agreed on the 250 repetitions, leading to one value for each of the 15 Bayesian network configurations, each sample size, conditioning set size and number of Sample Sets.
Thus, in total we have $15 \times 16 \times 4 \times 24$ = 23040 such values.
The results are summarized in Figure~\ref{fig:agreement}.
There are 4 figures, one for each different conditioning size, and each figure contains 5 curves, one for each class distribution of $T$.
Each such curve summarizes the results over all error variances, sample sizes and number of Sample Sets (that is, $3 \times 16 \times 24$ = 1152 points).
Note that the number of parameters of the largest model is always the conditioning size plus 2, as the model also includes the variable that is tested for (conditional) independence with $T$ and the intercept.
The $x$-axis shows the sample size per Sample Set, which is computed as the sample size divided by the number of Sample Sets.
We only show the results up to 500 samples per Sample Set; the agreement percentage approaches $100\%$ in all cases with increasing sample size, reaching at least $99\%$ with 5000 samples per Sample Set.
The $y$-axis shows how often both methods lead to the same decision.
To avoid cluttering, we computed the curves by fitting a power regression model $y = \alpha \cdot x^\beta + c$.
We found that this model is appropriate, as it has $R^2$ values between $0.75$ and $0.95$.
We conclude the following: \textit{(a) both approaches tend to make the same decision with increasing sample size, (b) the sample size per Sample Set required depends on the number of parameters and the class distribution, and increases with increasing number of parameters and class imbalance}.

\subsection{Simulation Results: STD vs EPV for Determining the Required Sample Size}
\begin{table}[!t]
\centering
	\caption{Median value of $c$ to obtain an agreement percentage between $85\%$ and $95\%$. $p_{max}$ corresponds to the proportion of the most frequent class, while df is the degrees of freedom in the largest model. The relative differences for the STD rule are smaller (less than 2 against over 2.5 for the EPV rule), suggesting it is more appropriate. A minimum value of $c = 10$ with the proposed rule is recommended and used in the experiments.}
    \label{tbl:essp}
  \begin{tabular}{lcrrrrrrrr}
    \toprule
    &
    \multicolumn{4}{c}{EPV Rule} &
    \multicolumn{4}{c}{STD Rule}
\\
    \cmidrule(lr){2-5} \cmidrule(lr){6-9} 
    \multicolumn{1}{c}{$p_{max}$ $|$ df} &
    \multicolumn{1}{c}{2} & \multicolumn{1}{c}{3} & \multicolumn{1}{c}{4} & \multicolumn{1}{c}{5} &
    \multicolumn{1}{c}{2} & \multicolumn{1}{c}{3} & \multicolumn{1}{c}{4} & \multicolumn{1}{c}{5} \\
     \midrule
     0.5 & 11.2 & 7.8 & 9.0 & 9.7 & 11.2 & 7.8 & 9.0 & 9.7 \\
     0.6 & 7.7 & 7.6 & 7.4 & 11.2 & 9.4 & 9.3 & 9.1 & 13.7 \\
     0.7 & 6.6 & 5.7 & 5.9 & 8.6 & 10.1 & 8.8 & 9.1 & 13.2 \\
     0.8 & 5.7 & 5.2 & 5.7 & 6.7 & 11.5 & 10.5 & 11.4 & 13.3\\
     0.9 & 5.0 & 4.4 & 4.2 & 4.4 & 14.9 & 13.2 & 12.5 & 13.3\\
    \bottomrule
  \end{tabular}
\end{table}

We propose an alternative rule to EPV, which is computed as $df \cdot c/\sqrt{p_0 \cdot p_1}$.
The denominator is the standard deviation of the class distribution, which follows a Bernoulli distribution.
We call this the STD rule hereafter.
For balanced class distributions the result is identical to the EPV rule, while for skewed distributions the value is always smaller.

To validate the STD rule, we used the results of the previous simulation experiment and computed the value of $c$ by solving the equation for $c$ and substituting in the values of the class distributions, degrees of freedom and sample size per Sample Set.
We kept the values of $c$ that correspond to an agreement percentage between $85\%$ and $95\%$ (focusing on an interesting range of high agreement between $p$-value computation methods), and computed their median value for each class distribution, conditioning size $k$ and for both rules.
Ideally, one would expect $c$ to be constant across rows (class distribution) and columns (conditioning size).
A constant value of $c$ for a rule means that the rule can exactly compute the required sample size to get an agreement percentage around $90\%$.
Furthermore, we note that the values of $c$ are not comparable between rules, and thus their exact values are not important; what matters is the relative difference between values of $c$ for the same rule.

The results are shown in Table~\ref{tbl:essp}.
Although the value of $c$ varies across class distributions and degrees of freedom, we can see that the relative differences are smaller the STD rule.
Specifically, for EPV $c$ ranges from 4.2 to 11.2, the latter being over 2.5 times larger, while for STD it ranges from 7.8 to 14.9, which is less than 2 times larger.
This suggests that the STD rule performs better than EPV across various conditioning set sizes and class distributions, at least on the experiments considered here.
Furthermore, the results suggest that a value of at least $c = 10$ should be used for STD to get reasonably accurate results.
We note that, in practice this value is much higher in most cases for PFBP, as it partitions the samples initially by considering the worst case scenario (i.e., selecting $\mathit{maxVars}$ variables).
Thus especially in early Iterations, which are the most crucial ones, PFBP will typically have a sufficient number of samples even with $c = 10$ to select the best variables.

\section{Simulating Data from Bayesian Networks}
\label{app:bn}
To generate data with complex correlation structures, we chose to generate data from Bayesian networks.
This is done in three steps: (a) generate a Bayesian network structure $\mathcal{G}$ with $N$ nodes (variables) and $M$ edges, (b) sample the parameters of $\mathcal{G}$, and (c) sample instances from $\mathcal{G}$.
We will next describe the procedures used for each step.

\subsection{Generating the Bayesian network structure}
First, we need to specify the number of nodes $N$ and the connectivity $C$ between those nodes, which implicitly corresponds to some number of edges $M$.
The connectivity parameter $C$ corresponds to the (average) degree of each variable.
Using the connectivity instead of setting the number of edges allows one to easily control the complexity of the network, as $C$ directly corresponds to the average number of parents and children of each node.
We proceed by showing how the edges were sampled.
Let $V_1, \dots, V_N$ be all nodes of $\mathcal{G}$, listed in topological order.
To sample the edges of the network we iterate over all pairs of variables $V_i$ and $V_j$ (i < j), and add an edge from $V_i$ to $V_j$ with probability $C/(N-1)$, ensuring acyclicity of the resulting graph.
It can be easily shown that this will result in a network with an average degree of $C$.

\subsection{Generating the Bayesian network parameters}
The first step is to pick the variable that corresponds to the outcome variable $T$.
We chose to use the node at position $\lceil N/2 \rceil$, as this node has the same number of parents and children on average.
For our experiments, we chose $T$ to be of binary type and all remaining variables to be of continuous type, but in principle everything stated can be easily adapted to other variable types.
In Bayesian networks, the each variable $V$ is a function of its parents $\mathbf{Pa}(V)$.
The functional form for continuous variables $V_i$ is
$$V_i = \beta_0 + \sum_{V_j \in \mathbf{Pa}(V_i)} \beta_j V_j + \epsilon_i$$
where $\beta_0$ is the intercept, $\beta_j$ is the coefficient of the $j$-th parent of $V_i$ and $\epsilon_i$ is its error term.
In our case, we set the intercept to 0, as it does not affect the correlation structure.
The coefficients $\beta_j$ are sampled uniformly at random from $[-1,-0.1] \cup [0.1,1]$ to avoid coefficients which are close to 0.
The error term $\epsilon_i$ follows a normal distribution with 0 mean and $\sigma_i^2$ variance, which was set to the default value of 1 in our experiments, unless stated otherwise.
Note that all variables are normally distributed as they are sums of normally distributed variables.
For each variable $V_i$ the mean equals zero and the variance equals $\sigma_i^2 + \sum_{V_j \in \mathbf{Pa}(V_i)} \beta_j^2$.
The fact that the variance increases may lead to numerical instabilities in practice, especially when generating large networks.
Because of that, we standardize each variable to have unit variance by dividing it with its standard deviation, which is the square root of the variance as described above.
For the target $T$, its log-odds ratio is again a linear function of its parents, defined as 
$$\log(\frac{P(T = 1)}{1 - P(T = 1)}) = \beta_0 + \sum_{V_j \in \mathbf{Pa}(T)} \beta_j V_j + \epsilon_T$$
As before, the log-odds ratio is standardized to have unit variance.

The value of $T$ is set to 1 whenever the log-odds ratio is larger than some threshold $t$, and to 0 otherwise.
Setting $t$ to 0 results in a 50/50 class distribution of $T$.
Other class distributions $p_0/p_1$ can be obtained by simply setting $t$ to $N_{0,1}^{-1}(1-p_0)$, where $N_{0,1}^{-1}$ is the standard normal inverse cumulative distribution function.
As a final note, the standardization method used above only guarantees that variables that come before $T$ in the topological ordering are standard normal variables.
As $T$ is not normally distributed (nor does it have unit variance), all variables that are direct or indirect functions of $T$ are not exactly normally distributed.
However, as this neither alters the correctness of the data generation method, nor leads to any other issues, we leave it as is.

\subsection{Sampling data from the generated Bayesian network}
To generate a sample, one has to traverse the network in topological order and to compute the value of each variable separately, using the formulas described previously.
By construction the network is already in topological order, which is simply given by the index of each variable.
To compute the value of a variable one has to compute the sum of its parents (if it has any parents), and to add the error term, which is drawn from a normal distribution.

\section{SNP Data Generation}
\label{app:snp}
To generate the SNP dataset we followed the procedure described in \cite{Canela2015}.
We used the HAPGEN 2 software \cite{HapGen} with the Hapmap 2 (release 22) CEU population \cite{HapMap} to simulate 500000 individuals (samples).
This population contains 2543887 SNPs, but only 592555 were kept, by filtering out the ones not available in the Illumina Human OmniExpress-12 v1.1 BeadChip \footnote{\url{https://support.illumina.com/array/array_kits/humanomniexpress-12-beadchip-kit.html}}.
The final dataset contains 500000 samples and 592555 SNPs.
Each variable takes values in $\{0,1,2\}$, which correspond to the number of reference alleles.
Thus, the dataset is dense, and requires approximately 2.16 TB memory (stored as double precision floats).
Naturally, fewer bytes can be used to store SNP data as each variable only takes 3 values, but this would require a specialized implementation.

\subsection{Phenotype Simulation}
Let $s_{ij}$ be the i-th value of the j-th SNP $s_j$, and $p_j$ be the reference allele frequency of SNP $j$, that is, $p_j$ is the average value of $s_{ij}$ divided by 2.
The standardized value of $s_{ij}$, $z_{ij}$ is defined as
$$z_{ij} = (s_{ij} - \mu_j) / \sigma_j$$
where $\mu_j = 2 p_j$ and $\sigma_j = \sqrt{2 p_j (1 - p_j)}$.

The phenotype (outcome) $y$ follows an additive genetic model
$$ y_i = g_i + e_i = \sum_{j=1}^M z_{ij} u_j + e_i$$
where $y_i$ is the i-th value of $y$, $g_i$ is the genetic effect, $e_i$ is the noise term, $M$ is the number of variables influencing $y$, and $u_j$ is the effect (coefficient) of $z_j$.
The coefficients $z_j$ were sampled from a normal distribution with zero mean and unit variance.
The error terms $e_i$ follow a normal distribution with zero mean and variance $\sigma_i^2 (1-h^2)/h^2$, where $\sigma_i^2$ is the variance of $g_i$ and $h^2$ corresponds to the trait heritability.
Naturally, the larger $h^2$, the more $y$ depends on the SNPs.
In our case, we chose $M = 100$ and set $h^2 = 0.7$, one of the values used in \cite{Canela2015}.
Finally, to obtain a binary outcome, we set the value of $y_i$ to $1$ if it is positive, and to $0$ otherwise, resulting in an approximately balanced outcome.

\bibliographystyle{abbrv}
\bibliography{ref}

\begin{thebibliography}{10}

\bibitem{Agresti2002}
A.~Agresti.
\newblock {\em Categorical Data Analysis}.
\newblock Wiley Series in Probability and Statistics. Wiley-Interscience, 2nd
  edition, 2002.

\bibitem{Akaike1973}
H.~Akaike.
\newblock {Information theory and an extension of the maximum likelihood
  principle}.
\newblock In {\em Second International Symposium on Information Theory}, pages
  267--281, Budapest, 1973. Akad\'{e}miai Kiado.

\bibitem{Aliferis2010JMLR}
C.~F. Aliferis, A.~Statnikov, I.~Tsamardinos, S.~Mani, and X.~D. Koutsoukos.
\newblock Local causal and {M}arkov blanket induction for causal discovery and
  feature selection for classification part i: Algorithms and empirical
  evaluation.
\newblock {\em Journal of Machine Learning Research}, 11(Jan):171--234, 2010.

\bibitem{Aliferis2003HITON}
C.~F. Aliferis, I.~Tsamardinos, and A.~Statnikov.
\newblock Hiton: a novel {M}arkov blanket algorithm for optimal variable
  selection.
\newblock In {\em AMIA Annual Symposium Proceedings}, volume 2003, page~21.
  American Medical Informatics Association, 2003.

\bibitem{Armijo1966}
L.~Armijo.
\newblock Minimization of functions having lipschitz continuous first partial
  derivatives.
\newblock {\em Pacific Journal of Mathematics}, 16(1):1--3, 1966.

\bibitem{Becker2007}
B.~J. Becker and M.-J. Wu.
\newblock The synthesis of regression slopes in meta-analysis.
\newblock {\em Statistical Science}, pages 414--429, 2007.

\bibitem{Bertsimas2016}
D.~Bertsimas, A.~King, R.~Mazumder, et~al.
\newblock Best subset selection via a modern optimization lens.
\newblock {\em The Annals of Statistics}, 44(2):813--852, 2016.

\bibitem{Boln-Canedo2015}
V.~Boln-Canedo, N.~Snchez-Maroo, and A.~Alonso-Betanzos.
\newblock {\em Feature Selection for High-Dimensional Data}.
\newblock Springer Publishing Company, Incorporated, 1st edition, 2015.

\bibitem{Borboudakis2017}
G.~Borboudakis and I.~Tsamardinos.
\newblock Forward-backward selection with early dropping.
\newblock {\em arXiv:1705.10770 [cs.LG]}, 2017.

\bibitem{Bradley2011}
J.~K. Bradley, A.~Kyrola, D.~Bickson, and C.~Guestrin.
\newblock Parallel coordinate descent for l1-regularized loss minimization.
\newblock In {\em Proceedings of the 28th International Conference on Machine
  Learning, {ICML} 2011, Bellevue, Washington, USA, June 28 - July 2, 2011},
  pages 321--328, 2011.

\bibitem{Brown2012}
G.~Brown, A.~Pocock, M.-J. Zhao, and M.~Luj\'{a}n.
\newblock Conditional likelihood maximisation: A unifying framework for
  information theoretic feature selection.
\newblock {\em J. Mach. Learn. Res.}, 13:27--66, Jan. 2012.

\bibitem{Canela2015}
O.~Canela-Xandri, A.~Law, A.~Gray, J.~A. Woolliams, and A.~Tenesa.
\newblock A new tool called dissect for analysing large genomic data sets using
  a big data approach.
\newblock {\em Nature communications}, 6, 2015.

\bibitem{HapGen}
C.~C. Chang, C.~C. Chow, L.~C. Tellier, S.~Vattikuti, S.~M. Purcell, and J.~J.
  Lee.
\newblock Second-generation plink: rising to the challenge of larger and richer
  datasets.
\newblock {\em Gigascience}, 4(1):7, 2015.

\bibitem{Collins2015}
F.~S. Collins and H.~Varmus.
\newblock A new initiative on precision medicine.
\newblock {\em New England Journal of Medicine}, 372(9):793--795, 2015.

\bibitem{HapMap}
I.~H. Consortium et~al.
\newblock A haplotype map of the human genome.
\newblock {\em Nature}, 437(7063):1299--1320, 2005.

\bibitem{Dougherty1995}
J.~Dougherty, R.~Kohavi, and M.~Sahami.
\newblock Supervised and unsupervised discretization of continuous features.
\newblock In {\em MACHINE LEARNING: PROCEEDINGS OF THE TWELFTH INTERNATIONAL
  CONFERENCE}, pages 194--202. Morgan Kaufmann, 1995.

\bibitem{Efron1994}
B.~Efron and R.~J. Tibshirani.
\newblock {\em An introduction to the bootstrap}.
\newblock CRC press, 1994.

\bibitem{Engle1984}
R.~F. Engle.
\newblock Wald, likelihood ratio, and lagrange multiplier tests in
  econometrics.
\newblock {\em Handbook of econometrics}, 2:775--826, 1984.

\bibitem{Fan2010}
J.~Fan, Y.~Feng, Y.~Wu, et~al.
\newblock High-dimensional variable selection for cox's proportional hazards
  model.
\newblock In {\em Borrowing Strength: Theory Powering Applications--A
  Festschrift for Lawrence D. Brown}, pages 70--86. Institute of Mathematical
  Statistics, 2010.

\bibitem{Fisher1932}
R.~Fisher.
\newblock {\em Statistical methods for research workers}.
\newblock Edinburgh Oliver \& Boyd, 1932.

\bibitem{Foutz1977}
R.~V. Foutz and R.~C. Srivastava.
\newblock The performance of the likelihood ratio test when the model is
  incorrect.
\newblock {\em The Annals of Statistics}, 5(6):1183--1194, 1977.

\bibitem{Friedman2001}
J.~Friedman, T.~Hastie, and R.~Tibshirani.
\newblock {\em The elements of statistical learning}, volume~1.
\newblock Springer series in statistics New York, 2001.

\bibitem{VanDeGeer2011}
S.~v.~d. Geer, P.~B{\"u}hlmann, and J.~Schelldorfer.
\newblock Estimation for high-dimensional linear mixed-effects models using
  l1-penalization.
\newblock {\em Scandinavian Journal of Statistics}, 38(2):197--214, 2011.

\bibitem{Guyon2003}
I.~Guyon and A.~Elisseeff.
\newblock An introduction to variable and feature selection.
\newblock {\em Journal of machine learning research}, 3(Mar):1157--1182, 2003.

\bibitem{RegressionModellingStrategies2001}
F.~Harrell.
\newblock {\em {Regression Modeling Strategies}}.
\newblock Springer, corrected edition, Jan. 2001.

\bibitem{Hedges1998}
L.~V. Hedges and J.~L. Vevea.
\newblock Fixed-and random-effects models in meta-analysis.
\newblock {\em Psychological methods}, 3(4):486, 1998.

\bibitem{Hosmer2013}
D.~W. Hosmer, Jr., S.~Lemeshow, and R.~X. Sturdivant.
\newblock {\em Introduction to the Logistic Regression Model}.
\newblock John Wiley \& Sons, Inc., 2013.

\bibitem{Ivanoff2016}
S.~Ivanoff, F.~Picard, and V.~Rivoirard.
\newblock Adaptive lasso and group-lasso for functional poisson regression.
\newblock {\em J. Mach. Learn. Res.}, 17(1):1903--1948, Jan. 2016.

\bibitem{John94}
G.~H. John, R.~Kohavi, K.~Pfleger, et~al.
\newblock Irrelevant features and the subset selection problem.
\newblock In {\em Machine learning: proceedings of the eleventh international
  conference}, pages 121--129, 1994.

\bibitem{Kerber1992}
R.~Kerber.
\newblock Chimerge: Discretization of numeric attributes.
\newblock In {\em Proceedings of the tenth national conference on Artificial
  intelligence}, pages 123--128. Aaai Press, 1992.

\bibitem{Konda2013}
P.~Konda, A.~Kumar, C.~R{\'e}, and V.~Sashikanth.
\newblock Feature selection in enterprise analytics: A demonstration using an
  {R}-based data analytics system.
\newblock {\em Proc. VLDB Endow.}, 6(12):1306--1309, Aug. 2013.

\bibitem{Kutner2004}
M.~H. Kutner, C.~J. Nachtsheim, J.~Neter, and W.~Li.
\newblock {\em {Applied Linear Statistical Models}}.
\newblock McGraw-Hill/Irwin, 5th edition, Aug. 2004.

\bibitem{MXM16}
V.~Lagani, G.~Athineou, A.~Farcomeni, M.~Tsagris, and I.~Tsamardinos.
\newblock Feature selection with the r package mxm: Discovering
  statistically-equivalent feature subsets.
\newblock {\em arXiv:1611.03227 [stat.ML]}, 2016.

\bibitem{Lagani2013}
V.~Lagani, G.~Kortas, and I.~Tsamardinos.
\newblock Biomarker signature identification in “omics” data with
  multi-class outcome.
\newblock {\em Computational and structural biotechnology journal}, 6(7):1--7,
  2013.

\bibitem{Lagani2010}
V.~Lagani and I.~Tsamardinos.
\newblock Structure-based variable selection for survival data.
\newblock {\em Bioinformatics}, 26(15):1887--1894, 2010.

\bibitem{DBLP:conf/nips/LeeKZHGX14}
S.~Lee, J.~K. Kim, X.~Zheng, Q.~Ho, G.~A. Gibson, and E.~P. Xing.
\newblock On model parallelization and scheduling strategies for distributed
  machine learning.
\newblock In {\em Advances in Neural Information Processing Systems 27: Annual
  Conference on Neural Information Processing Systems 2014, December 8-13 2014,
  Montreal, Quebec, Canada}, pages 2834--2842, 2014.

\bibitem{Li2016Perspective}
J.~Li, K.~Cheng, S.~Wang, F.~Morstatter, R.~P. Trevino, J.~Tang, and H.~Liu.
\newblock Feature selection: A data perspective.
\newblock {\em arXiv preprint arXiv:1601.07996}, 2016.

\bibitem{Li2016}
Q.~Li, S.~Qiu, S.~Ji, P.~M. Thompson, J.~Ye, and J.~Wang.
\newblock Parallel lasso screening for big data optimization.
\newblock In {\em Proceedings of the 22Nd ACM SIGKDD International Conference
  on Knowledge Discovery and Data Mining}, KDD '16, pages 1705--1714, New York,
  NY, USA, 2016. ACM.

\bibitem{Loughin2004}
T.~M. Loughin.
\newblock A systematic comparison of methods for combining p-values from
  independent tests.
\newblock {\em Computational statistics \& data analysis}, 47(3):467--485,
  2004.

\bibitem{Margaritis2009}
D.~Margaritis.
\newblock Toward provably correct feature selection in arbitrary domains.
\newblock In {\em Advances in Neural Information Processing Systems}, pages
  1240--1248, 2009.

\bibitem{Margaritis2000}
D.~Margaritis and S.~Thrun.
\newblock {B}ayesian network induction via local neighborhoods.
\newblock In S.~A. Solla, T.~K. Leen, and K.~M\"{u}ller, editors, {\em Advances
  in Neural Information Processing Systems 12}, pages 505--511. MIT Press,
  2000.

\bibitem{Meier2008}
L.~Meier, S.~V.~D. Geer, and P.~B{\"u}hlmann.
\newblock The group lasso for logistic regression.
\newblock {\em Journal of the Royal Statistical Society, Series B}, 2008.

\bibitem{Meinshausen2006}
N.~Meinshausen and P.~B{\"u}hlmann.
\newblock High-dimensional graphs and variable selection with the lasso.
\newblock {\em The annals of statistics}, pages 1436--1462, 2006.

\bibitem{Meng2016}
X.~Meng, J.~Bradley, B.~Yavuz, E.~Sparks, S.~Venkataraman, D.~Liu, J.~Freeman,
  D.~Tsai, M.~Amde, S.~Owen, D.~Xin, R.~Xin, M.~J. Franklin, R.~Zadeh,
  M.~Zaharia, and A.~Talwalkar.
\newblock Mllib: Machine learning in apache spark.
\newblock {\em J. Mach. Learn. Res.}, 17(1):1235--1241, Jan. 2016.

\bibitem{Miller2002}
A.~Miller.
\newblock {\em Subset selection in regression}.
\newblock CRC Press, 2002.

\bibitem{Minka2003}
T.~P. Minka.
\newblock A comparison of numerical optimizers for logistic regression.
\newblock {\em Unpublished draft}, 2003.

\bibitem{Pearl1988}
J.~Pearl.
\newblock {\em Probabilistic Reasoning in Intelligent Systems: Networks of
  Plausible Inference}.
\newblock Morgan Kaufmann Publishers Inc., San Francisco, CA, USA, 1988.

\bibitem{Pearl2000}
J.~Pearl.
\newblock {\em Causality, Models, Reasoning, and Inference}.
\newblock Cambridge University Press, Cambridge, U.K., 2000.

\bibitem{Pearl1991}
J.~Pearl and T.~S. Verma.
\newblock {\em A Theory of Inferred Causation}.
\newblock Morgan Kaufmann, 1991.

\bibitem{Peduzzi1996}
P.~Peduzzi, J.~Concato, E.~Kemper, T.~R. Holford, and A.~R. Feinstein.
\newblock A simulation study of the number of events per variable in logistic
  regression analysis.
\newblock {\em Journal of clinical epidemiology}, 49(12):1373--1379, 1996.

\bibitem{Pena2007}
J.~M. Pe{\~n}a, R.~Nilsson, J.~Bj{\"o}rkegren, and J.~Tegn{\'e}r.
\newblock Towards scalable and data efficient learning of markov boundaries.
\newblock {\em International Journal of Approximate Reasoning}, 45(2):211--232,
  2007.

\bibitem{Peng2005}
H.~Peng, F.~Long, and C.~Ding.
\newblock Feature selection based on mutual information criteria of
  max-dependency, max-relevance, and min-redundancy.
\newblock {\em IEEE Transactions on pattern analysis and machine intelligence},
  27(8):1226--1238, 2005.

\bibitem{Gallego17}
S.~Ramírez-Gallego, H.~Mouriño-Talín, D.~Martínez-Rego, V.~Bolón-Canedo,
  J.~M. Benítez, A.~Alonso-Betanzos, and F.~Herrera.
\newblock An information theory-based feature selection framework for big data
  under apache spark.
\newblock {\em IEEE Transactions on Systems, Man, and Cybernetics: Systems},
  PP(99):1--13, 2017.

\bibitem{SpirtesRichardson2002}
T.~Richardson and P.~Spirtes.
\newblock Ancestral graph {M}arkov models.
\newblock {\em Annals of Statistics}, pages 962--1030, 2002.

\bibitem{Sato2016}
T.~Sato, Y.~Takano, R.~Miyashiro, and A.~Yoshise.
\newblock Feature subset selection for logistic regression via mixed integer
  optimization.
\newblock {\em Computational Optimization and Applications}, 64(3):865--880,
  2016.

\bibitem{Schwarz1978}
G.~Schwarz et~al.
\newblock Estimating the dimension of a model.
\newblock {\em The annals of statistics}, 6(2):461--464, 1978.

\bibitem{DBSNP}
S.~T. Sherry, M.-H. Ward, M.~Kholodov, J.~Baker, L.~Phan, E.~M. Smigielski, and
  K.~Sirotkin.
\newblock dbsnp: the ncbi database of genetic variation.
\newblock {\em Nucleic acids research}, 29(1):308--311, 2001.

\bibitem{Singh2009}
S.~Singh, J.~Kubica, S.~Larsen, and D.~Sorokina.
\newblock Parallel large scale feature selection for logistic regression.
\newblock In {\em Proceedings of the 2009 SIAM International Conference on Data
  Mining}, pages 1172--1183. SIAM, 2009.

\bibitem{Spirtes2000}
P.~Spirtes, C.~N. Glymour, and R.~Scheines.
\newblock {\em Causation, prediction, and search}.
\newblock MIT press, 2nd edition, 2000.

\bibitem{Statnikov2013}
A.~Statnikov, N.~I. Lytkin, J.~Lemeire, and C.~F. Aliferis.
\newblock Algorithms for discovery of multiple {M}arkov boundaries.
\newblock {\em Journal of Machine Learning Research}, 14(Feb):499--566, 2013.

\bibitem{Tibshirani1996}
R.~Tibshirani.
\newblock Regression shrinkage and selection via the lasso.
\newblock {\em Journal of the Royal Statistical Society. Series B
  (Methodological)}, pages 267--288, 1996.

\bibitem{Tsamardinos2003}
I.~Tsamardinos and C.~F. Aliferis.
\newblock Towards principled feature selection: relevancy, filters and
  wrappers.
\newblock In {\em Proceedings of the Ninth International Workshop on Artificial
  Intelligence and Statistics}, 2003.

\bibitem{Tsamardinos2003MMPC}
I.~Tsamardinos, C.~F. Aliferis, and A.~Statnikov.
\newblock Time and sample efficient discovery of {M}arkov blankets and direct
  causal relations.
\newblock In {\em Proceedings of the Ninth ACM SIGKDD international conference
  on Knowledge discovery and data mining}, pages 673--678. ACM, 2003.

\bibitem{Tsamardinos2003IAMB}
I.~Tsamardinos, C.~F. Aliferis, and A.~R. Statnikov.
\newblock Algorithms for large scale {M}arkov blanket discovery.
\newblock In {\em FLAIRS conference}, volume~2, 2003.

\bibitem{Tsamardinos2009}
I.~Tsamardinos and A.~P. Mariglis.
\newblock Multi-source causal analysis: Learning {B}ayesian networks from
  multiple datasets.
\newblock In {\em IFIP International Conference on Artificial Intelligence
  Applications and Innovations}, pages 479--490. Springer, 2009.

\bibitem{Verma1988}
T.~Verma and Pearl.
\newblock {Causal Networks: Semantics and Expressiveness}.
\newblock In {\em in Proceedings, 4th Workshop on Uncertainty in Artificial
  Intelligence}, pages 352--359, Aug. 1988.

\bibitem{Vittinghof2007}
E.~Vittinghoff and C.~E. McCulloch.
\newblock Relaxing the rule of ten events per variable in logistic and {C}ox
  regression.
\newblock {\em American journal of epidemiology}, 165(6):710--718, 2007.

\bibitem{Vuong1989}
Q.~H. Vuong.
\newblock Likelihood ratio tests for model selection and non-nested hypotheses.
\newblock {\em Econometrica: Journal of the Econometric Society}, pages
  307--333, 1989.

\bibitem{Weisberg2005}
S.~Weisberg.
\newblock {\em Applied linear regression}, volume 528.
\newblock John Wiley \& Sons, 2005.

\bibitem{Welch1982}
W.~J. Welch.
\newblock Algorithmic complexity: three np-hard problems in computational
  statistics.
\newblock {\em Journal of Statistical Computation and Simulation},
  15(1):17--25, 1982.

\bibitem{White1982}
H.~White.
\newblock Maximum likelihood estimation of misspecified models.
\newblock {\em Econometrica}, 50(1):1--25, 1982.

\bibitem{Wilks1938}
S.~S. Wilks.
\newblock The large-sample distribution of the likelihood ratio for testing
  composite hypotheses.
\newblock {\em The Annals of Mathematical Statistics}, 9(1):60--62, 03 1938.

\bibitem{XING2016179}
E.~P. Xing, Q.~Ho, P.~Xie, and D.~Wei.
\newblock Strategies and principles of distributed machine learning on big
  data.
\newblock {\em Engineering}, 2(2):179 -- 195, 2016.

\bibitem{Zaharia2010}
M.~Zaharia, M.~Chowdhury, M.~J. Franklin, S.~Shenker, and I.~Stoica.
\newblock Spark: Cluster computing with working sets.
\newblock In {\em HotCloud}, 2010.

\bibitem{Zhai2014}
Y.~Zhai, Y.~Ong, and I.~W. Tsang.
\newblock The emerging “big dimensionality”.
\newblock {\em {IEEE} Comp. Int. Mag.}, 9(3):14--26, 2014.

\bibitem{Zhang2011}
K.~Zhang, J.~Peters, D.~Janzing, and B.~Sch{\"o}lkopf.
\newblock Kernel-based conditional independence test and application in causal
  discovery.
\newblock In {\em Proceedings of the Twenty-Seventh Conference on Uncertainty
  in Artificial Intelligence}, pages 804--813, 2011.

\bibitem{Zhao2013}
Z.~Zhao, R.~Zhang, J.~Cox, D.~Duling, and W.~Sarle.
\newblock Massively parallel feature selection: an approach based on variance
  preservation.
\newblock {\em Machine Learning}, 92(1):195--220, 2013.

\bibitem{Asilomar2013}
P.~Zhimin, Y.~Ming, and Y.~Wotao.
\newblock Parallel and distributed sparse optimization.
\newblock In {\em Proceedings of the Asilomar Conference on Signals, Systems
  and Computers}, 2013.

\end{thebibliography}

\end{document}